\def\BibTeX{{\rm B\kern-.05em{\sc i\kern-.025em b}\kern-.08em
    T\kern-.1667em\lower.7ex\hbox{E}\kern-.125emX}}
\newtheorem{thm}{Theorem}
\newtheorem{lem}{Lemma}
\newtheorem{defn}{Definition}
\newtheorem{Prop}{Proposition}
\newcommand{\descr}[1]{\noindent \textbf{#1}}
\begin{document}
\title{Near-Optimal Algorithms for Instance-level Constrained $k$-Center Clustering}

\author{Longkun Guo\textsuperscript{\orcidlink{0000-0003-2891-4253}}, 
Chaoqi Jia\textsuperscript{\orcidlink{0000-0002-6548-390X}},  
Kewen Liao\textsuperscript{\orcidlink{0000-0003-0371-6525}}, 
Zhigang Lu\textsuperscript{\orcidlink{0000-001-5102-6217}},  
and Minhui Xue\textsuperscript{\orcidlink{0000-0002-9172-4252}}

\thanks{Manuscript created 15 June, 2024; revised 06 April 2025; accepted 18 May 2025.  
(\textit{Corresponding author: Longkun Guo.})

    Longkun Guo is with School of Mathematics and Statistics, Fuzhou University, Fuzhou 350116, China  (e-mail:longkun.guo@gmail.com)
    
    Chaoqi Jia is with School of Accounting, Information Systems and Supply Chain, RMIT University, Melbourne, VIC 3000, Australia
    
    Kewen Liao is with School of Information Technology, Deakin University, Burwood VIC 3125, Australia
    
    Zhigang Lu is with Western Sydney University, NSW 2751, Australia
    
    Minhui Xue is with CSIRO's Data61, Sydney 2015, Australia}}

\markboth{IEEE TRANSACTIONS ON NEURAL NETWORKS AND LEARNING SYSTEMS,~Vol.~, No.~, ~2025}{G\MakeLowercase{uo et al.}: Near-Optimal Algorithms for Instance-level Constrained $k$-Center Clustering}

\IEEEpubid{0000--0000/00\$00.00~\copyright~2025 IEEE}

\maketitle
\begin{abstract}
Many practical applications impose a new challenge of utilizing instance-level background knowledge  (e.g., subsets of similar or dissimilar data points) within their input data to improve clustering results. In this work, we build on the widely adopted $k$-center clustering, modeling its input instance-level background knowledge as must-link and cannot-link constraint sets, and formulate the constrained $k$-center problem. Given the long-standing challenge of developing efficient algorithms for constrained clustering problems,  we first derive an efficient approximation algorithm for constrained $k$-center at the best possible approximation ratio of 2 with LP-rounding technology. Recognizing the limitations of LP-rounding algorithms including high runtime complexity and challenges in parallelization, we subsequently develop a greedy algorithm that does not rely on the LP and can be efficiently parallelized. This algorithm also achieves the same approximation ratio 2 but with lower runtime complexity. Lastly, we empirically evaluate our approximation algorithm against baselines on various real datasets, validating our theoretical findings and demonstrating significant advantages of our algorithm in terms of clustering cost, quality, and runtime complexity. We make our code and datasets publicly available to facilitate reproducibility and further research\footnote{\url{https://github.com/ChaoqiJia/TNNLS-constrained_kCenter}}.
\end{abstract}

\begin{IEEEkeywords}
    Constrained clustering, $k$-Center, Approximation algorithm, LP-rounding, Greedy algorithm.
\end{IEEEkeywords}

\section{Introduction}
\label{sec:Intro}

\IEEEPARstart{C}{enter-based} clustering is a fundamental unsupervised learning technique in machine learning that focuses on organizing similar data points into groups or clusters using various distance metrics. Classical clustering problems such as $k$-means~\cite{1_lloyd1982least}, $k$-center~\cite{2_gonzalez1985clustering_minmax, 3_hochbaum1985best_2_k_center}, and $k$-median~\cite{4_charikar1999constant}  are all known for their $\mathcal{NP}$-hard complexity. Among these methods, $k$-center is particularly notable for its robustness against outliers, its ability to provide guaranteed low approximation ratios, and its computational efficiency and scalability. Unlike $k$-means, $k$-center yields deterministic outcomes and aims for clusters that are balanced in terms of their maximum distances to the nearest cluster center, effectively minimizing the largest covering radius across clusters.
Extended from this basic $k$-center objective, there has been significant research interest in devising $k$-center based applications and their associated optimization models. These include specialized variations like capacitated $k$-center~\cite{5_khuller2000capacitated}, which introduces limits on cluster capacities; $k$-center with outliers~\cite{6_charikar2001algorithms}, focusing on handling data anomalies; and advanced models such as minimum coverage $k$-center~\cite{7_lim2005k}, connected $k$-center~\cite{8_ester2006joint}, and recently fair $k$-center~\cite{9_chierichetti2017fair, 10_kleindessner2019fair, 11_bera2022fair} and distributed $k$-center~\cite{12_icmlHuangFH0023}.

In various machine learning applications~\cite{13_zhang2013effective, 14_liu2017private}, the prevalence of unlabeled data and the scarcity of labeled samples — owing to labeling costs — pose significant challenges, especially when the actual ground truth is unavailable or obscured~\cite{15_basu2004active}. Nonetheless, the presence of background knowledge, whose information could be weaker but easier to obtain than labels, indicating whether specific data point pairs should be grouped together or kept separate, was shown to significantly enhance the efficacy of center-based clustering~\cite{16_basu2008constrained}. Leveraging such background knowledge requires more sophisticated clustering models that incorporate must-link (ML) and cannot-link (CL) constraints to reflect this auxiliary information \cite{17_wagstaff2000clustering}. The introduction of clustering with instance-level constraints marked a pivotal development in utilizing background knowledge to guide clustering decisions, necessitating that data points linked by an ML constraint be assigned to the same cluster, while those connected by a CL constraint are distributed into distinct clusters. 

\IEEEpubidadjcol
\subsection{Problem Formulation}
\label{subsec:Problem}

Formally, consider an input dataset $P =\{ p_{1},\dots,p_{n}\}$ within the metric space. The goal of the $k$-center problem is to identify a subset of cluster centers  $C \subseteq P$ to minimize the maximum distance from any point in $P$ to its nearest center in $C$. Furthermore,  denoting the distance function between any two data points as $d(\cdot,\,\cdot)$, we define the distance between a point $p \in P$ and a center set $C$  as $d(p,\,C)=\min_{c\in C}d(p,\,c)$. Thus, the $k$-center problem seeks an optimal set of centers $C^*$ of size at most $k$ that minimizes the maximum distance, or the min-max radius, defined as $r^* = \max_{p \in P} d(p,\,C^*)$. Formally, the objective is to find
\[C^* = \underset{C \subseteq P, |C| \le k}{\arg\min} \, \underset{p\in P} {\max}\,d(p,\,C).\]

Expanding upon the $k$-center problem, we introduce a variant with predefined must-link (ML) and cannot-link (CL) constraints, incorporating background knowledge into the clustering process. Instead of the traditional pairwise constraint approach~\cite{18_wagstaff2001constrained}, we utilize an equivalent set-based formulation for both ML and CL constraints. Specifically, ML constraints are represented as a collection of sets ${\mathcal{X}} =\{X_{1}, \ldots, X_{h}\}$, where each $X_{i} \subseteq P$ is a maximum set of points connected by mutual ML relationships. CL constraints are similarly denoted as ${\mathcal{Y}} = \{Y_{1}, \ldots, Y_{l}\}$, with each $Y_{i} \subseteq P$ and $|Y_{i}| \leq k$, formed from mutual CL relationships. For a point $p$, let $\sigma(p)$ indicate its assigned cluster center. The ML and CL constraints mandate that for every set $X \in \mathcal{X}$, $\forall (p,q) \in X$ if and only if $\sigma(p) = \sigma(q)$, and for every set $Y \in \mathcal{Y}$, $\forall (p,q) \in Y$ if and only if $\sigma(p) \neq \sigma(q)$. By design, sets within ${\mathcal{X}}$ are mutually exclusive, allowing any overlapping ML sets to merge due to the transitive nature of ML relationships.  In contrast, CL sets within  ${\mathcal{Y}}$ may be intersecting or disjoint, where there are applications for either of them.
This paper focuses on developing approximation algorithms specifically for disjoint CL sets for two primary reasons: (1) Disjoint CL sets are of significant interest in numerous applications; (2) The strategies devised for disjoint CL sets can be extended to tackle scenarios where CL sets intersect. This extension is possible through techniques such as merging points that are equivalent across the sets or eliminating points at the intersections, thereby allowing for a broader application of the proposed methods.

\subsection{Challenge}
\label{subsec:Challenge}

The biggest challenge faced when adopting constrained clustering with background knowledge is its computational complexity. As noted, most clustering problems are inherently $\mathcal{NP}$-hard, despite that there usually exist either efficient heuristic algorithms with no performance guarantee or approximation algorithms with non-practical performance ratio and high runtime complexity. Moreover, clustering models with instance-level ML and CL constraints leads to the following severe approximation and computation barriers (particularly caused by the CL constraints) that significantly limit their use:

\begin{Prop}
\label{prop:inapprox}
\cite{19_davidson2007complexity}
It is $\mathcal{NP}$-complete even only to determine whether an instance of the CL-constrained clustering problem is feasible.
\end{Prop}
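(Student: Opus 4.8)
The plan is to establish both membership in $\mathcal{NP}$ and $\mathcal{NP}$-hardness, with the hardness following from a direct reduction from graph $k$-colorability. For membership, I would observe that a certificate is simply an assignment $\sigma$ of points to at most $k$ clusters; verifying feasibility amounts to checking, for each set $Y \in \mathcal{Y}$, that every pair of points in $Y$ receives distinct labels under $\sigma$ (and that each ML set is monochromatic). This check runs in time polynomial in $\sum_{Y \in \mathcal{Y}} |Y|^2$, and hence in the input size, placing the feasibility question in $\mathcal{NP}$.

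For the hardness direction, I would reduce from the decision problem of $k$-colorability of an undirected graph, which is $\mathcal{NP}$-complete for every fixed $k \geq 3$. Given an instance $G = (V, E)$, I would construct a CL-constrained clustering instance by taking the point set $P = V$ (equipped with any fixed metric, since feasibility is independent of the actual distances), introducing no ML constraints, and encoding each edge $(u,v) \in E$ as a two-element cannot-link set $\{u, v\} \in \mathcal{Y}$, so that $|Y| = 2 \leq k$ as required. The number of available clusters is set to $k$.

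The crux is the equivalence between a proper $k$-coloring of $G$ and a feasible clustering. In one direction, a proper coloring assigns the endpoints of every edge different colors, so interpreting each color class as a cluster (and selecting any member as its center) yields an assignment $\sigma$ that violates no cannot-link set. Conversely, any feasible clustering uses at most $k$ centers; labeling each point by the index of its assigned center produces a mapping in which the two endpoints of every edge, being separated by the corresponding $Y$, receive different labels --- that is, a proper $k$-coloring of $G$. Since the reduction is clearly computable in polynomial time, $\mathcal{NP}$-hardness follows, and together with membership this yields $\mathcal{NP}$-completeness.

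The main obstacle I anticipate lies not in the combinatorial core, which is essentially the coloring correspondence, but in aligning the reduction with the paper's set-based constraint formulation and its ``if and only if'' phrasing: I must ensure that using size-two cannot-link sets faithfully captures pairwise separation without inadvertently imposing any spurious must-link behaviour, and that the freedom to employ fewer than $k$ clusters in a feasible solution does not break the equivalence. The latter is harmless, since a coloring using strictly fewer than $k$ colors remains a valid $k$-coloring, so the two notions coincide.
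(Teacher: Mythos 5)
Your proof is correct and follows exactly the route the paper indicates: the paper states this result as a citation to Davidson and Ravi and remarks only that it ``can be obtained via a reduction from the $k$-coloring problem,'' which is precisely the reduction you flesh out (edges as size-two cannot-link sets, colors as clusters, with the intersecting CL sets this produces being exactly what makes the general feasibility problem hard). Your additional care about NP membership, metric-independence of feasibility, and the harmlessness of using fewer than $k$ clusters fills in the details the paper leaves implicit, with no gaps.
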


Arbitrarily intersected CL constraints were known to be problematic to clustering as their inclusion leads to a computationally intractable feasibility problem, as stated in the above theorem. That means, under the assumption of $\mathcal{P} \neq \mathcal{NP}$, it is impossible to devise an efficient polynomial time algorithm to even determine, for an instance of the CL-constrained clustering problem (e.g., CL-constrained $k$-center or $k$-means), whether there exists a clustering solution satisfying all arbitrary CL constraints irrespective of the optimization objective. This inapproximability result can be obtained via a reduction from the $k$-coloring problem and, we believe, has hindered the development of efficient approximation algorithms for constrained clustering despite their many useful applications~\cite{16_basu2008constrained}. For instance, for the closely related constrained $k$-means problem with both ML and CL constraints, only heuristic algorithms~\cite{18_wagstaff2001constrained, 20_davidson2005clustering} without performance guarantee were known. Therefore, a strong motivation for us is to algorithmically overcome the long-standing theoretical barriers on constrained $k$-center that could have been prohibiting its wide adoption (like $k$-center) in practice. We also hope that our proposed techniques can inspire novel solutions to other more intricate problems like constrained $k$-means.

\subsection{Other Related Work}
\label{subsec:Other}

\descr{{Constrained clustering}} Instance-level or pairwise ML and CL constraints have been widely adopted in clustering problems such as $k$-means clustering~\cite{18_wagstaff2001constrained,21_jia2023efficient}, spectral clustering~\cite{22_coleman2008spectral} and hierarchical clustering~\cite{23_davidson2009using}. Basu et al.~\cite{16_basu2008constrained} have collated an extensive list of constrained clustering problems and applications. As confirmed by~\cite{24_xing2002distance,18_wagstaff2001constrained}, instance-level constraints are beneficial for improving the clustering quality.

\descr{{Constrained $k$-center}} Several studies have included $k$-center clustering with instance-level constraints in rather limited settings. Davidson et al.~\cite{25_davidson2010sat} were the pioneers to consider constrained $k$-center when $k$ is 2. They incorporated an SAT-based framework to obtain an approximation scheme with $(1 + \epsilon)$-approximation for this extreme case. Brubach et al.~\cite{26_brubach2021fairness} studied $k$-center only with ML constraints and achieved an approximation ratio $2$. In contrast, we (including our constructed baseline methods) neither consider a limited special case (i.e., with a very small cluster number $k$) nor only the much simpler ML constraints. A comparison on the runtime of the previous algorithms and ours can be found in Tab.~\ref{tab:compar_complexity_thm} in the supplementary material.

\subsection{Results and Contribution}
\label{subsec:Results}

{
In this paper, we gradually develop an efficient solution to constrained $k$-center clustering with performance guarantees in both theory and practice. Moreover, the experiments demonstrate that our method can extend to general constrained $k$-center clustering with intersecting CL sets by leveraging the structure of disjoint CL sets, achieving comparable performance in practice. 
The main contributions of this paper can be summarized as follows:
   
\begin{itemize}[leftmargin=20pt] 
    \item Propose the first constant factor approximation for constrained $k$-center with disjoint CL sets based on a novel structure called \textit{Reverse Dominating Set} (RDS), achieving the best possible provable ratio of 2.
    
 \item Devise two methods for computing RDS based on the construction of an auxiliary graph: the first is based on a designated LP with an integral polyhedron (Lem.~\ref{lem:integeral}); and the second is a greedy algorithm by exploiting the relationship between RDS and a maximum matching of the auxiliary graph, which finds an RDS in runtime $O(k^2)$ (Alg.~\ref{alg:fastgreedy}).

 \item Extensive experiments are carried out on a variety of real-world datasets to demonstrate the clustering effectiveness and efficiency of our proposed approximation algorithm with theoretical guarantees.
   
\end{itemize}}

\section{Algorithm for CL-Constrained \texorpdfstring{$k$}{k}-Center}
\label{sec:Alg}

In this section, we propose a threshold-based algorithm for CL $k$-center and show it deserves an approximation ratio of 2. The key idea of our algorithm is to incrementally expand a set of centers while arguably ensuring that each center is in a distinct cluster of the optimal solution. In the following, we first introduce a structure called \textit{reverse dominating set} (RDS) and propose an algorithm using the structure to grow the desired center set; then, we propose a linear programming (LP) relaxation and use it to find a maximum RDS. Moreover, we accelerate the algorithm by devising a faster LP primal-dual algorithm for finding the maximum RDS.

For briefness, we first assume that the optimal radius ($r^*$) for the constrained $k$-center problem is already known and utilize it as the threshold. Aligns with previous studies on threshold-based algorithms~\cite{27_badanidiyuru2014streaming}, we discuss the problem with both ML and CL constraints without knowing $r^*$ in the next section.
\begin{figure*}[t]
    \captionsetup{justification=centering}

    \subfloat[]{
        \includegraphics[width=0.318\textwidth]{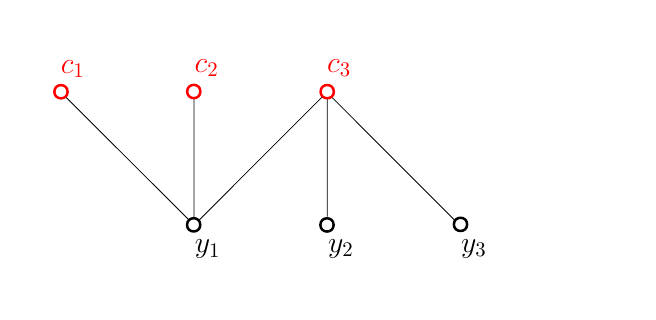}
        \label{subfig:ProcessofRDS_S1}
    }
    \subfloat[]{
        \includegraphics[width=0.318\textwidth]{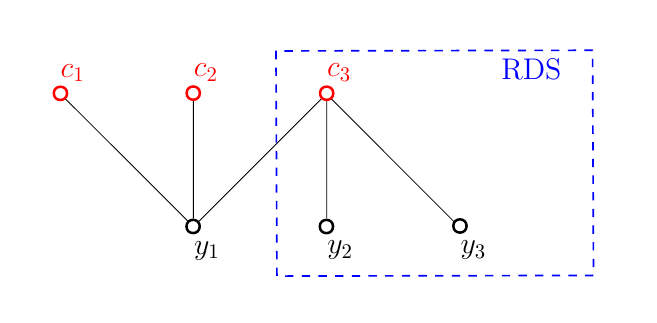}
        \label{subfig:ProcessofRDS_S2}
    }
    \subfloat[]{
        \includegraphics[width=0.318\textwidth]{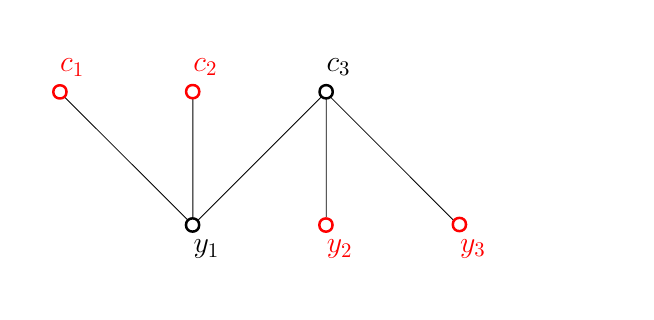}
        \label{subfig:ProcessofRDS_S3}
    }
    \caption{An example of augmenting the current center set by an RDS: (a) The auxiliary graph with the current center set $C=\{c_1,\,c_2,\,c_3\}$ and a CL set $Y=\{y_1,\,y_2,\,y_3\}$; (b) An RDS $(Y', C')$ in which $C' = \{c_3\}$ and $Y'=\{y_{2},y_{3}\}$; (c) The augmented center set (in red).}
    \label{fig:ProcessofRDS}
\end{figure*}

\begin{algorithm}[t]
    \SetAlgoLined
    \small
    \caption{{Approximating  CL $k$-center via RDS.}}
    \label{alg:k-center-RDS}
    \KwIn{ A family of $l$ disjoint CL sets $\mathcal{Y}$, a positive integer $k$ and a distance bound $\eta=2r^{*}$.}
    \KwOut{A set of centers $C$.}

    Initialization: Set $C\leftarrow \emptyset$, $C'\leftarrow\emptyset$ and $Y'\leftarrow\emptyset$\;
    
    \While{true}{

    Set $f_{RDS} \leftarrow 0$\;
    \tcp{$f_{RDS} =0$ indicates there exists no RDS regarding any $Y'\in\mathcal{Y}$ and  $C$.}
    
         \For{each $Y\in\mathcal{Y}$}{
      
           Construct an auxiliary graph $G(Y,C;E)$ regarding $\eta$ according to Def.~\ref{def:AG}\;
           
         \If { $G(Y,C;E)$ contains  an RDS } {
            Update the center set using a maximum RDS  $(Y',C')$: $C\leftarrow C\cup Y'\setminus C'$\;

            Set  $f_{RDS} \leftarrow 1$\;
        }
        }
        \If {$f_{RDS}$ equals 0}{ Return $C$.}
    }
\end{algorithm}

\subsection{Reverse Dominating Set and the Algorithm}
\label{subsec:Reverse}

To facilitate our description, we introduce the following auxiliary bipartite graph, denoted as $G(Y,C;E)$, which allows us to represent the relationships between a CL set $Y$ and $C$.

\begin{defn}
\label{def:AG}
The auxiliary bipartite graph $G(Y,C;E)$ regarding the threshold $\eta $ is a graph with vertex sets  $Y$ and $C$, where the edge set $E$  is as follows: an edge $e(y,z)$ is included in $E$ iff the metric distance $d(y,z)$ between $y \in Y$ and $z \in C$ is bounded by $\eta$, i.e., $d(y,z) \leq \eta$.
\end{defn}
Then, we use  $N_G(Y')$ to denote the set of neighboring points of $Y'$ in $G$, where $Y' \subseteq Y$,
and consequently define \textit{reverse dominating set} (RDS) as below:

\begin{defn}
\label{def:RDS}
For a center set $C$, a CL set $Y$,  and the auxiliary bipartite graph $G(Y,C;E)$ regarding $\eta$, we say $(Y',C')$, $Y'\subseteq Y$ and  $C'=N_G(Y')\subseteq C$,  is a reverse dominating set (RDS) iff
$|C'|<|Y'|$, where $N_G(Y')$ denotes the set of neighboring points of $Y'$ in $G$.
\end{defn}

In particular, for a point $y\in Y$ that is with distance $d\left(y,\,z\right)>2r^{*}$ to any center $z\in C$ in the auxiliary graph $G(Y,C;E)$, $(\{y\},\emptyset)$ is an RDS.

Note that RDS is a special case of Hall violator that it considers only the case $|C'|<|Y'|$. Anyhow, it inherits the $\mathcal{NP}$-hardness of computing a minimum Hall violator \cite{28_cygan2015parameterized}: 
   \textit{ To find an RDS with minimum cardinality is $\mathcal{NP}$-hard.}

In comparison with the above-mentioned $\mathcal{NP}$-hardness, we discover that a maximum RDS (i.e. an RDS with maximized $|Y'|- |C'|$) can be computed in polynomial time. Consequently,
our algorithm proceeds in iterations, where in each iteration it computes a maximum RDS regarding the current center set $C$ and a CL set  $Y\in\cal{Y}$ (if there exists any), and uses the computed RDS to increase $C$ towards a desired solution.   The formal layout of our algorithm is as in Alg.~\ref{alg:k-center-RDS}.

In Fig.~\ref{fig:ProcessofRDS}, we depict an example of RDS, and then demonstrate the augmentation of a center set according to the RDS  in detail.
    Subfig.~\ref{subfig:ProcessofRDS_S1} demonstrates the auxiliary bipartite graph $G$ (see Def.~\ref{def:AG}), where the two sets of points  (in red and black)  respectively correspond to the current center set $C=\{c_1,\,c_2,\,c_3\}$ and a CL set $Y=\{y_1,\,y_2,\,y_3\}$. Note that according to Def.~\ref{def:AG}, a solid edge exists between a center in $C$  and a point in $Y$  iff their distance is bounded by a given $\eta$.
    In Subfig.~\ref{subfig:ProcessofRDS_S2},  an RDS $(Y', C')$ is depicted in the (blue) dashed line rectangle, where $C' = \{c_3\}$ and $Y'=\{y_{2},y_{3}\}$. Apparently, $(Y', C')$ satisfies the criteria of RDS as in Def.~\ref{def:RDS}: $|C'| = 1 < |Y'| = 2$, where $C' = N_G(Y')$ is the set of neighboring points of $Y'$ in $G$. 
   In Subfig.~\ref{subfig:ProcessofRDS_S3}, we lastly augment $C$ with the RDS $(Y', C')$ by 
   {updating the} 
   center set as $C\setminus C' \cup Y'$,  obtaining the augmented new center set $C=\{c_1,\,c_2,\,y_2,\,y_3\}$.

\subsection{Correctness of Alg.~\ref{alg:k-center-RDS}}

The correctness of Alg.~\ref{alg:k-center-RDS} can be stated as follows:
\begin{thm}
\label{thm:correctnessofalg1}
Alg.~\ref{alg:k-center-RDS} always outputs a center set  $C$ which partitions the dataset $P$ with  the following three conditions satisfied: (1) For each point  $p\in P$, there exist a center $c\in C$ with $d(p,c)\le 2r^*$; (2) all the CL constraints are satisfied; (3) the size of $C$ is bounded by $k$.
\end{thm}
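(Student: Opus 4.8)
The plan is to establish the three conditions through two largely independent arguments: a Hall-type matching argument applied to the auxiliary graphs at termination (for conditions (1) and (2)), and an invariant relating the running center set $C$ to an optimal solution $C^*$ maintained across all augmentations (for condition (3) and for termination). Throughout I assume the instance is feasible, i.e.\ there is an optimal CL-respecting assignment $\sigma^*$ induced by a center set $C^*$ with $|C^*|\le k$ and radius $r^*$; this is what legitimizes using $\eta=2r^*$ as the threshold in Def.~\ref{def:AG}. It is convenient to assume w.l.o.g.\ that every point of $P$ lies in some CL set, treating each otherwise-unconstrained point as a singleton CL set (which imposes no constraint); then $\bigcup\mathcal{Y}=P$, and coverage of all of $P$ can be read off the CL sets.

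For condition (3) and termination, the key invariant I would carry is: there exists an injection $\phi\colon C\to C^*$ with $d(c,\phi(c))\le r^*$ for every $c\in C$. This holds vacuously at initialization, and it immediately yields $|C|\le|C^*|\le k$. To maintain it across an augmentation $C\mapsto C\cup Y'\setminus C'$ by a maximum RDS $(Y',C')$ of some CL set $Y$, I define the new injection to agree with the old $\phi$ on the surviving centers $C\setminus C'$, and to send each newly added $y\in Y'$ to $\sigma^*(y)$. Since $Y'\subseteq Y$ and $\sigma^*$ respects the CL constraint on $Y$, the map $y\mapsto\sigma^*(y)$ is injective on $Y'$ and satisfies $d(y,\sigma^*(y))\le r^*$. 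The crux is to show the two image sets are disjoint, so that the combined map stays injective: if some $y\in Y'$ had $\sigma^*(y)=\phi(c)$ for a surviving center $c\in C\setminus C'$, then the triangle inequality would give $d(y,c)\le d(y,\sigma^*(y))+d(\sigma^*(y),c)\le 2r^*=\eta$, forcing $c\in N_G(Y')=C'$ and contradicting $c\in C\setminus C'$. Because $|Y'|>|C'|$, each augmentation strictly increases $|C|$, and since the invariant keeps $|C|\le k$ at all times, there are at most $k$ augmentations, which proves termination.

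For conditions (1) and (2), I would use Hall's theorem on each auxiliary graph $G(Y,C;E)$ at termination. Unfolding Def.~\ref{def:RDS}, the absence of any RDS for $Y$ means exactly that $|N_G(Y')|\ge|Y'|$ for every $Y'\subseteq Y$, i.e.\ Hall's condition holds, so there is a matching of $G(Y,C;E)$ saturating $Y$. This matching assigns every point of $Y$ to a \emph{distinct} center lying within distance $\eta=2r^*$, which simultaneously gives the within-$2r^*$ coverage of condition (1) and the distinct-center requirement of condition (2). Because the CL sets are disjoint, each point belongs to exactly one $Y$, so these per-set matchings paste together into a single consistent assignment $\sigma$ of the whole dataset; alternatively, one sees coverage directly because any uncovered point $y$ (farther than $2r^*$ from all of $C$) would yield the singleton RDS $(\{y\},\emptyset)$, contradicting that the algorithm returned with $f_{RDS}=0$. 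I expect the main obstacle to be the disjointness step in the invariant: everything there rests on the defining property $C'=N_G(Y')$ of an RDS, which is precisely what lets the triangle inequality separate the optimal clusters of the newly added centers from those of the retained ones.
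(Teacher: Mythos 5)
Your proposal is correct and follows essentially the same route as the paper: conditions (1) and (2) come from Hall's theorem applied to the auxiliary graphs at termination (the paper's Lem.~\ref{lem:feas-nece}), and your injection invariant $\phi\colon C\to C^*$ with $d(c,\phi(c))\le r^*$ is precisely the paper's Lem.~\ref{lem:bound_k} (all current centers lie in pairwise distinct optimal clusters), maintained by the same three ingredients: the CL constraint on $Y'$, the induction hypothesis on $C\setminus C'$, and the triangle-inequality separation enabled by $C'=N_G(Y')$. Your write-up merely makes explicit a few points the paper leaves implicit, such as treating unconstrained points as singleton CL sets and spelling out the triangle inequality behind ``cannot share an optimal cluster,'' as well as the termination count of at most $k$ augmentations.
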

According to the procedure of Alg.~\ref{alg:k-center-RDS}, Condition~(1) apparently holds since there exist no edges between any two points with a distance larger than $ 2r^*$.
For Condition~(2),  we have the following property which is in fact a restatement of Hall's marriage theorem:

\begin{lem}\cite{29_hall1935representatives}
\label{lem:feas-nece}
There exists a perfect matching in $G(Y,C;E)$, iff there exists no RDS in $G(Y,C;E)$.
\end{lem}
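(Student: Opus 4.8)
The plan is to recognize this statement as exactly Hall's marriage theorem phrased in contrapositive form, since an RDS is precisely a \emph{Hall violator}: a subset $Y' \subseteq Y$ whose neighborhood $C' = N_G(Y')$ satisfies $|N_G(Y')| < |Y'|$. Hence the assertion ``a perfect matching (one saturating $Y$) exists iff no RDS exists'' is logically identical to ``a matching saturating $Y$ exists iff $|N_G(Y')| \geq |Y'|$ holds for every $Y' \subseteq Y$,'' which is Hall's condition. I would therefore split the argument into the two directions of this equivalence.

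For the necessity direction ($\Rightarrow$), which is the easy half, I would assume a matching $M$ saturating $Y$ exists and argue by an injection: for any $Y' \subseteq Y$, the edges of $M$ incident to $Y'$ send $Y'$ injectively into $N_G(Y')$, so $|N_G(Y')| \geq |Y'|$ for all $Y'$. Consequently no subset of $Y$ can form an RDS. This direction needs nothing beyond the observation that distinct matched endpoints are distinct neighbors.

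For the sufficiency direction ($\Leftarrow$), I would assume $|N_G(Y')| \geq |Y'|$ for every $Y' \subseteq Y$ and construct a saturating matching by induction on $|Y|$. The base case $|Y| = 1$ is immediate, as a single vertex has at least one neighbor. In the inductive step I would distinguish two cases: (i) every proper nonempty $Y' \subsetneq Y$ has strict surplus $|N_G(Y')| \geq |Y'| + 1$, in which case I match an arbitrary $y \in Y$ to an arbitrary neighbor, delete both endpoints, and observe that Hall's condition survives in the smaller graph (each surviving subset loses at most the one deleted neighbor) so that induction applies; and (ii) some proper nonempty $A \subsetneq Y$ is tight with $|N_G(A)| = |A|$, in which case I match $A$ to $N_G(A)$ by induction, then match $Y \setminus A$ within the residual graph on $(Y \setminus A,\, C \setminus N_G(A))$ after checking that Hall's condition persists there via $|N_G(A \cup B)| \geq |A| + |B|$ for every $B \subseteq Y \setminus A$.

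The main obstacle is the tight-subset case (ii): one must carefully verify that Hall's condition is inherited by both the subgraph on $A$ and the residual subgraph on $Y \setminus A$, so that the two inductively obtained matchings combine into a single matching saturating $Y$ without reusing a center. Everything else is routine; indeed, since we cite Hall's theorem directly, the formal write-up may simply invoke it once the identification of an RDS with a Hall violator is made explicit.
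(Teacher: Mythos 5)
Your proposal is correct, and its core is exactly what the paper does: the paper offers no proof of this lemma at all, simply remarking that it ``is in fact a restatement of Hall's marriage theorem'' and citing Hall. Your identification of an RDS with a Hall violator (a set $Y'\subseteq Y$ with $|N_G(Y')|<|Y'|$) is precisely that restatement, and your reading of ``perfect matching'' as a matching saturating $Y$ is the right one --- it is what the algorithm actually needs (each point of the CL set assigned a distinct center), and the equivalence would be false for a both-sides-perfect matching when $|C|>|Y|$. Where you go beyond the paper is in supplying a self-contained proof of Hall's theorem itself: the injection argument for the easy direction and the classic tight-set induction (with the check $|N_G(A\cup B)|\ge|A|+|B|$ ensuring Hall's condition passes to the residual graph) for the converse. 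This buys rigor and independence from the citation at the cost of length; since the induction you sketch is the standard textbook proof and is carried out correctly, either packaging is acceptable, and as you note yourself, the write-up can collapse to a one-line invocation of Hall once the RDS--violator identification is made explicit.
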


Recall that Alg.~\ref{alg:k-center-RDS} terminates once it is unable to find any RDS between the current $C$ and any $Y\in \mathcal{Y}$. Then by the above lemma,  this implies the existence of a perfect matching in each $G(Y,C;E)$ regarding the current $C$ and each  $Y\in \cal Y$. Consequently, this ensures the satisfaction of each CL set in $\mathcal{Y}$.
Lastly, for Condition~(3), we have the following lemma, which immediately indicates $|C|\leq k$:
\begin{lem}
\label{lem:bound_k}
Let $C$ be the set of centers output by  Alg.~\ref{alg:k-center-RDS}. Assume that $\mathcal{V}^{*}$ is an optimum solution that consists of a set of clusters. Then for each pair of centers $c_1, c_2$ of $C$,  $\sigma^*(c_1)\neq \sigma^*(c_2)$ must hold, where $\sigma^*(c_i)$ is the center of $c_i$ according to $\mathcal{V}^{*}$.
\end{lem}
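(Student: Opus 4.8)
The plan is to prove a stronger invariant: throughout the execution of Alg.~\ref{alg:k-center-RDS} the current center set $C$ always consists of points lying in pairwise distinct clusters of $\mathcal{V}^*$, and then specialize it to the output $C$. I would establish this by induction on the number of completed iterations. The base case is immediate, since $C$ is initialized to $\emptyset$ and the condition holds vacuously. The whole argument leans on two structural facts: that $\mathcal{V}^*$ is an optimal \emph{feasible} solution, so every point is within distance $r^*$ of its optimal center $\sigma^*(\cdot)$ and every CL set is honored; and that the threshold is fixed to $\eta=2r^*$.

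For the inductive step, suppose at the start of some iteration all centers of the current $C$ already lie in distinct optimal clusters, and the iteration updates $C$ to $C_{\mathrm{new}}=(C\setminus C')\cup Y'$ using a maximum RDS $(Y',C')$ with $C'=N_G(Y')$ computed from some $Y\in\mathcal{Y}$. I would verify the invariant for $C_{\mathrm{new}}$ by inspecting every pair of its centers in three cases. When both centers survive from $C\setminus C'$, they lie in distinct clusters directly by the induction hypothesis. When both come from $Y'\subseteq Y$, they lie in distinct clusters because $\mathcal{V}^*$ honors the CL constraint on $Y$, forcing all points of $Y$ into pairwise distinct clusters.

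The crux is the mixed case: a surviving center $z\in C\setminus C'$ paired with a newly added center $y\in Y'$. Here I would argue by contradiction. If $y$ and $z$ shared the same optimal cluster, i.e.\ $\sigma^*(y)=\sigma^*(z)$, then the triangle inequality together with the optimal covering radius would give $d(y,z)\le d(y,\sigma^*(y))+d(z,\sigma^*(z))\le r^*+r^*=2r^*=\eta$. By Def.~\ref{def:AG} this forces an edge $e(y,z)$ into the auxiliary graph, hence $z\in N_G(Y')=C'$, contradicting $z\in C\setminus C'$. Therefore no such pair exists, and the invariant propagates to $C_{\mathrm{new}}$.

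The main obstacle I anticipate is exactly this mixed case, as it is the only place where the specific choice $\eta=2r^*$ and the defining property $C'=N_G(Y')$ of an RDS interact; the entire argument hinges on tying the metric bound $2r^*$ to the neighborhood structure that the update removes. Once the invariant is established, applying it to the terminal center set yields the stated claim; and since $\mathcal{V}^*$ is an optimal feasible solution using at most $k$ clusters, the pairwise distinctness of the optimal clusters hosting the centers of $C$ immediately gives $|C|\le k$, which is Condition~(3) of Thm.~\ref{thm:correctnessofalg1}.
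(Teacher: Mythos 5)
Your proposal is correct and follows essentially the same route as the paper: an induction over iterations showing the invariant that all current centers lie in pairwise distinct optimal clusters, with the case analysis splitting pairs into surviving centers (induction hypothesis), pairs inside $Y'$ (CL constraint honored by $\mathcal{V}^*$), and mixed pairs handled via the triangle inequality through a common optimal center forcing an edge in the auxiliary graph and hence membership in $C'=N_G(Y')$. If anything, your write-up makes explicit the triangle-inequality step that the paper's proof states only implicitly, so no gap remains.
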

\begin{proof}

We show that the lemma holds in each iteration of the algorithm. That is, to show every point of the growing center set $C$ belongs to a different optimal cluster in each iteration.

For the first iteration,  the lemma holds as analyzed below. In the iteration, the algorithm includes $Y$ in the center set $C=\emptyset$. Due to the  CL constraint of  $Y$, any pair of points of $C$ (currently equals $Y$) must be in different clusters of the optimal solution.

Suppose the lemma is true in the  $l$th iteration of the algorithm for the center set $C_l$, where the points of $C_l$ appear in exactly $|C_l|$ different clusters in the optimal solution. Then we need only to show the lemma also holds after the $(l+1)$th iteration which processes $Y$.

To demonstrate this, we consider two cases based on the presence or absence of an RDS in $G(Y,C_l;E)$:
\begin{enumerate}
\item[(1)] $G(Y,C_l;E)$ contains no RDS:

 Following Lem.~\ref{lem:feas-nece}, in the case, there must exist a perfect matching between $Y$ and $C_l$ in $G(Y,C_l;E)$. So the current center set  $C_l$  remains unchanged in the $(l+1)$th iteration that $C_{l+1}=C_l$,  and hence the lemma obviously holds.

\item[(2)] Otherwise, center set $C_{l+1}$ is updated by Step 7 of Alg.~\ref{alg:k-center-RDS}:

 Assume that $(Y', C')$ is an RDS  in $G(Y,C_l;E)$ with respect to $C_l$ and $Y$. Firstly, by  the induction hypothesis, the points of $C_l\setminus C'$ must appear in $|C_l\setminus C'|$ different clusters of $\mathcal{V}^{*}$. Secondly, due to the CL constraint of  $Y$, the points of $Y'$ must appear in $|Y'|$  different clusters of the optimal clustering $\mathcal{V}^{*}$. Lastly, by the   RDS definition, the distance between every point of $C\setminus C'$ and any point of $Y'$ must exceed $2r^*$. Thus, any center in $C\setminus C'$ can not share an identical cluster (in  $\mathcal{V}^{*}$) with any point of $Y'$, and \textit{vice versa}. Hence, the points within $Y'\cup C\setminus C'$ exactly appear in $|Y'\cup C\setminus C'|$ different clusters of $\mathcal{V}^{*}$.\end{enumerate}
 This completes the proof.
\end{proof}

\begin{lem}\label{lem:Alg1runtime}Let $t_{RDS}$ be the time of finding an RDS or determining there exists no RDS in a given graph.
Then Alg.~\ref{alg:k-center-RDS} deserves a runtime complexity of  $O(kl\cdot t_{RDS})=O(nk\cdot t_{RDS})${, where $l = |\mathcal{Y}|$ denotes the number of sets in $\cal Y$}.  
\end{lem}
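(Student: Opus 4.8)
The plan is to bound the total work by counting how many times the inner loop body can execute and then multiply by the cost of each RDS computation. The statement asserts a complexity of $O(kl \cdot t_{RDS}) = O(nk \cdot t_{RDS})$, so I would establish the bound $O(kl \cdot t_{RDS})$ directly and then argue the identity $O(kl) = O(nk)$ holds under the problem's structural assumptions.

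First I would analyze the \texttt{while} loop. Each iteration of the \texttt{while} loop runs the inner \texttt{for} loop once over all $l = |\mathcal{Y}|$ sets, and for each set $Y \in \mathcal{Y}$ it constructs the auxiliary graph and performs one RDS computation, costing $t_{RDS}$ per set; hence each \texttt{while} iteration costs $O(l \cdot t_{RDS})$. The crux is therefore to bound the number of \texttt{while} iterations by $O(k)$. For this I would use the monotone progress argument implicit in Lem.~\ref{lem:bound_k}: whenever an RDS $(Y',C')$ is found and the center set is updated via $C \leftarrow C \cup Y' \setminus C'$, the cardinality $|C|$ strictly increases, since by Def.~\ref{def:RDS} we have $|C'| < |Y'|$ and $C' = N_G(Y') \subseteq C$ while $Y' \cap C = \emptyset$. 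By Lem.~\ref{lem:bound_k}, the resulting centers always lie in distinct optimal clusters, so $|C|$ never exceeds $k$. The key observation is then that a \texttt{while} iteration which does \emph{not} strictly increase $|C|$ must leave $f_{RDS} = 0$ and trigger termination. Consequently, every \texttt{while} iteration except the last strictly increases $|C|$ by at least one, so there are at most $k+1 = O(k)$ iterations, giving the overall bound $O(k \cdot l \cdot t_{RDS}) = O(kl \cdot t_{RDS})$.

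The main obstacle I anticipate is the subtlety that within a \emph{single} \texttt{while} iteration the \texttt{for} loop may find and apply several RDSs (one per qualifying $Y$), so a single \texttt{while} iteration can raise $|C|$ by more than one. This is fine for the upper bound, since it only makes $|C|$ grow faster and hence terminates sooner, but I must be careful to phrase the progress argument so that it counts \texttt{while} iterations rather than individual updates: the correct statement is that any \texttt{while} iteration performing at least one update sets $f_{RDS}=1$ and strictly increases $|C|$, and since $|C|$ is capped at $k$, the number of such iterations is at most $k$, with one additional terminating iteration where $f_{RDS}=0$. I would also verify that the per-iteration cost absorbs the graph construction, which is dominated by $t_{RDS}$ (or can be folded into it).

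Finally, to obtain the equivalent expression $O(nk \cdot t_{RDS})$, I would invoke the structural bound on the CL sets: the $l$ sets in $\mathcal{Y}$ are disjoint, so $l \le n$, and more tightly each satisfies $|Y_i| \le k$; since the sets are disjoint subsets of $P$, we have $l \le n$, yielding $O(kl) = O(nk)$ as an upper estimate. I would note that $O(kl \cdot t_{RDS})$ is the sharper bound and $O(nk \cdot t_{RDS})$ follows immediately from $l \le n$, completing the proof.
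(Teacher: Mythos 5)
Your proposal is correct and follows essentially the same route as the paper's proof: bound each \texttt{while} iteration by $O(l\cdot t_{RDS})$ via the inner \texttt{for} loop, bound the number of \texttt{while} iterations by $O(k)$ using the facts that $|C|\le k$ (Thm.~\ref{thm:correctnessofalg1}) and that every non-terminal iteration strictly increases $|C|$ (since an applied RDS has $|Y'|>|C'|$), and conclude with $l\le n$. Your treatment is in fact slightly more careful than the paper's in two respects — explicitly noting that one \texttt{while} iteration may apply several RDS updates (which only helps the bound), and justifying $O(kl)=O(nk)$ from disjointness of the CL sets — so there is nothing further to fix.
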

\begin{proof} 
  In Alg.~\ref{alg:k-center-RDS}, each for-loop finds an RDS for each $Y\in \cal Y$, and hence consumes $O(lt_{RDS})$. 
Moreover, Alg.~\ref{alg:k-center-RDS} repeats the content of while-loop for $O(k)$ iterations, each of which executes a for-loop. That is because the size of $C$ is bounded by $k$ following Thm.~\ref{thm:correctnessofalg1}, while on the other hand, each for-loop (except the last one) finds at least one RDS and increases the size of $C$ for at least one (otherwise $f_{RDS}=0$ holds and the algorithm terminates). 
Therefore,  the total runtime is $O(kl\cdot t_{RDS})=O(nk\cdot t_{RDS})$.
\end{proof}

\subsection{ Computation of RDS using LP}
\label{subsec:Efficient}
It remains to give a method to compute a maximum RDS. For this task, we propose a linear programming (LP) relaxation and then show that any basic solution of the LP is integral. Let $C$ be the set of current centers in an iteration and $Y$ be a CL set. Then we can relax the task of finding a maximum RDS $(Y',\,C')$ (i.e. an RDS with maximized $|Y'|- |C'|$) as in the following linear program (LP (1)):
{
\begin{align*}
\max &  & \sum_{y\in Y}y-\sum_{z\in C}z\\
s.t. &  & y-z  \leq 0 &&  & \forall e(y,\,z)\in E\\
 &  & 0\leq y,z  \leq1 &&  & \forall y\in Y,\,z\in C
\end{align*}
}
Note that when we force $y,z\in\{0,1\}$, the above LP~(1) is an integer linear program that exactly models the task of finding a maximum RDS. Moreover, we observe that any basic solution of LP~(1) is integral, as stated below:
\begin{lem}
\label{lem:integeral}
 Any feasible basic solution of LP~(1) in integral. In other words,  the values of every $y$ and $z$ in a feasible basic solution of LP~(1) must be integers.
\end{lem}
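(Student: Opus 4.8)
The plan is to use the fact that a feasible basic solution of LP~(1) is precisely a vertex of the feasible polyhedron, and then to show directly that no vertex can have a fractional coordinate. First I would suppose, for contradiction, that $(y^*, z^*)$ is a feasible basic solution in which the set $F$ of variables taking strictly fractional values (i.e.\ values in $(0,1)$) is nonempty. The key observation is that for any variable in $F$, neither box constraint $0 \le \cdot$ nor $\cdot \le 1$ is tight, so the only constraints that can be tight at a fractional variable are the edge constraints $y - z \le 0$, and such a constraint is tight exactly when $y^* = z^*$.

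Next I would build a feasible perturbation direction supported on $F$. Consider the subgraph $H$ of $G(Y,C;E)$ induced on the fractional variables $F$, keeping only those edges $e(y,z)$ whose constraint is tight, i.e.\ $y^* = z^*$. For a small $\delta$ supported on $F$, both $(y^*,z^*)+\delta$ and $(y^*,z^*)-\delta$ remain feasible provided that $\delta_y = \delta_z$ on every tight edge of $H$, since the slack edges and the box constraints can absorb any sufficiently small perturbation. Hence I can set $\delta$ to be a nonzero constant on one connected component of $H$ and zero elsewhere; this is a valid nonzero direction in which both $(y^*,z^*)\pm\delta$ are feasible. But then $(y^*,z^*)$ is the midpoint of two distinct feasible points, contradicting that it is a vertex. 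Therefore $F=\emptyset$ and the solution is integral.

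The structural reason behind this argument is that the constraint matrix of LP~(1) is totally unimodular: the edge rows form the signed node--arc incidence matrix of a bipartite, and in particular directed, graph, each row containing one $+1$ and one $-1$, and appending the identity-like rows coming from the box constraints preserves total unimodularity; since the right-hand side is integral, every vertex is integral. I could alternatively phrase the entire proof this way by invoking the standard result that a totally unimodular system with integral right-hand side has only integral vertices, verifying unimodularity through, e.g., the Ghouila--Houri row-partition criterion.

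The step I expect to require the most care is verifying that the perturbation $\delta$ keeps all constraints feasible simultaneously---specifically, that choosing $\delta$ constant on a connected component of the tight-edge subgraph is enough to preserve every tight edge while the strict slack in the remaining edge and box constraints absorbs the perturbation for $\delta$ small enough. Equivalently, in the total-unimodularity route, the delicate point is the clean verification that the incidence structure together with the box rows is genuinely TU, rather than merely having $\{-1,0,1\}$ entries.
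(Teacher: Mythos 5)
Your proof is correct, but it takes a genuinely different route from the paper's. The paper argues via the rank condition in the definition of a basic solution: after assuming (w.l.o.g.) that every variable is strictly fractional, no box constraint is active, so all active constraints are edge constraints $y-z=0$; a maximal linearly independent family of these corresponds to an acyclic edge set (any cycle yields a linear dependence), i.e., a forest, which has at most (number of variables)$\,-\,1$ edges --- too few linearly independent active constraints for a basic solution, a contradiction. You instead use the equivalent extreme-point characterization and exhibit an explicit perturbation: a uniform $\pm\delta$ on one connected component of the tight-edge subgraph induced on the fractional variables preserves every tight edge, is absorbed by the slack and box constraints for small $\delta$, and writes the point as the midpoint of two distinct feasible points. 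These are dual views of the same fact (rank deficiency of the active constraint system versus a nonzero feasible direction in its null space), but your version is more self-contained: it avoids the paper's glossed-over ``replace integral variables by $0$ or $1$'' reduction, since you simply assign zero perturbation outside the fractional support. The one step you should state explicitly (it is immediate, and your construction silently relies on it) is that a tight edge cannot join a fractional variable to an integral one, since tightness forces equal values; this guarantees no tight constraint leaves the component you perturb. Your alternative route via total unimodularity --- the edge rows form (up to transpose) the incidence matrix of a directed bipartite graph, appending the box rows preserves TU, and the right-hand side is integral --- is also valid and is the most general of the three arguments, though it trades the first-principles reasoning for a citation to the Hoffman--Kruskal theorem.
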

\begin{proof}
Suppose the lemma is not true.  We can assume without loss of generality that  $0<y<1$ and $0<z<1$ for all $y$ and $z$. Because otherwise we can simply replace $y$ (or $z$) by 0 (or 1) when $y$ (or $z$) equals 0 or 1.
We construct a graph $E_{f}\subseteq E$ (initially an empty set) simply as below:  For each active constraint $y-z=0$ that is linear independent with the constraints already with the corresponding edges in $ E_{f}$, add $e(y,z)$ to $ E_{f}$.
Then the number of active linear independent constraints is clearly bounded by $|E_{f}|$. On the other hand, there exists no cycle
in $E_{f}$, because any cycle in $E_{f}$ indicates a set of linear dependent constraints. Then, $E_{f}$ is a forest, and hence the number of points (variables) in $E_{f}$
is at least $|E_{f}|+1$. According to the definition of a basic solution in linear programming, there must be at least  $|E_{f}|+1$  linearly independent active constraints. This contradicts the fact that the number of such constraints
is bounded by $|E_{f}|$.  This completes the proof.
\end{proof}

From the above lemma, we can obtain a maximum RDS by computing an optimal basic (fractional) solution to LP~(1). It is worth noting that LPs can be efficiently solved in polynomial time using widely-used LP solvers like CPLEX~\cite{30_cplex22_1}. Consequently, we immediately derive the first polynomial-time 2-approximation for CL $k$-center.

{
\section{Acceleration of Computation by Greedy Algorithm with Provable Ratio }

In this section, we first introduce a greedy algorithm designed to accelerate the computation of the \textit{reverse dominating set} (RDS), thereby improving the theoretical runtime of Alg.~\ref{alg:k-center-RDS} mainly incurred by solving the LP formulation. We then demonstrate that the greedy algorithm can successfully find a maximum RDS, should one exist.

\subsection{The greedy algorithm for finding maximum RDS}
\label{subsec:greedy_improve_RDS}
Regarding the current processing $Y$, the key idea of our greedy algorithm is first to find a maximum matching in the auxiliary graph $G(Y,C;E)$, and then starting from the unmatched points of $Y$ to grow an RDS according to the matching, until it arguably becomes a maximum RDS.

The main steps of the algorithm proceed as follows. First, we construct graph $G(Y,C;E)$ and identify a maximum matching $M$ therein. Then, we initialize $Y^\prime\leftarrow Y\setminus M$, which represents the set of uncovered points of $Y$, and  subsequently set  $C^\prime\leftarrow N_G(Y^\prime)$, where recall that $N_G(Y^\prime)$ denotes the set of neighbors of $Y^\prime$ in $G$.   To further expand the center set $C$ to the maximum, we iteratively grow  $Y^\prime$ and $C^\prime$ as below: (1) Update $Y^\prime\leftarrow Y^\prime\cup N_M(C^\prime)$, where $N_M(C^\prime)$ denotes the set of neighbors of $C^\prime$ in maximum matching $M$; (2) Then, set $C^\prime\leftarrow C^\prime\cup N_G(Y^\prime)$. The detailed algorithm is as illustrated in Alg.~\ref{alg:fastgreedy}. 

\begin{algorithm}[t]
\small
\caption{{Greedy algorithm for maximum RDS.}}
\label{alg:fastgreedy}
\KwIn{Center set $C$ and CL set $Y$.}
\KwOut{An RDS $(Y',\,C')$.}

 Construct the auxiliary graph $G(Y,C;E)$ according to Def.~\ref{def:AG} and set $C'\leftarrow\emptyset$\;
 
 Find a maximum matching in $G$, say $M$ 
 and set $Y'\leftarrow Y \setminus M$\;

 \While{$C'$ does not equal $N_G(Y')$}{

 \tcp{$N_G(Y')$ denotes the set of neighbors of $Y'$ in $G$.}
         Set $C'\leftarrow C'\cup N_G(Y')$ and then $Y' \leftarrow Y'\cup N_M(C')$, where $N_M(C')$ is the set of neighbours of $C'$ in $M$;
}

Return $(Y', C')$ as the maximum RDS.

\end{algorithm}

\subsection{Correctness proof and runtime analysis}
\label{subsec:corre_proof}

For convenience, we denote the set of $Y'$ and $C'$ in the $j$th iteration as  $Y'_{(j)}$ and $ C'_{(j)}$, respectively. Moreover,  $y_j$ and $c_j$ represent two points of $Y'_{(j)}\setminus Y'_{(j-1)}$ and $ C'_{(j)}\setminus  C'_{(j-1)}$, respectively. Additionally, we define $\Delta_j=C'_{(j)}\setminus C'_{(j-1)}$.

\begin{lem}\label{lem:CprimeinM}
    During the while-loop procedure, $C'\subseteq M$ always holds. 
\end{lem}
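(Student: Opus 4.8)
The plan is to read the statement $C'\subseteq M$ in the sense used by Alg.~\ref{alg:fastgreedy}, namely that every vertex placed into $C'$ is covered (matched) by the maximum matching $M$; recall that the algorithm itself writes $Y\setminus M$ for the $M$-uncovered points of $Y$, so $M$ doubles as the set of matched vertices. The whole argument will rest on a single structural fact: a \emph{maximum} matching admits no $M$-augmenting path, that is, no alternating path both of whose endpoints are uncovered. I would therefore recast the while-loop as implicitly growing the alternating forest rooted at the uncovered points $Y\setminus M$ with which $Y'$ is initialized, and prove the containment as a byproduct of a reachability invariant on that forest.

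Concretely, I would prove by induction on the iteration index $j$ the following invariant: every $y\in Y'_{(j)}$ is the endpoint of an even-length $M$-alternating path starting at some uncovered vertex of $Y$ (the length-$0$ case being the roots $Y\setminus M$ themselves), and every $z\in C'_{(j)}$ is the endpoint of an odd-length such path. The base case is immediate from $Y'_{(0)}=Y\setminus M$ and $C'_{(0)}=\emptyset$. For the inductive step I would inspect the two updates performed inside the loop. When $C'$ is enlarged by $N_G(Y')$, each newly added $z$ is a $G$-neighbour of some $y\in Y'$ carrying an even alternating path; appending the edge $e(y,z)$ extends this to an odd path to $z$, and I must verify that this appended edge lies outside $M$ so that alternation is preserved. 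When $Y'$ is subsequently enlarged by $N_M(C')$, each new $y$ is the $M$-partner of some $z\in C'$ reached by an odd path, so appending the matching edge $e(z,y)$ produces an even path to $y$; since this edge lies in $M$ by the definition of $N_M$, alternation again holds.

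With the invariant in hand the conclusion is one line: if some $z\in C'$ were \emph{not} covered by $M$, then the odd-length alternating path witnessing $z\in C'$ would start at an uncovered point of $Y$ and end at the uncovered point $z$, i.e.\ it would be an $M$-augmenting path, contradicting the maximality of the matching $M$ computed in Step~2 of Alg.~\ref{alg:fastgreedy}. Hence every vertex of $C'$ is matched, which is exactly $C'\subseteq M$, and since the invariant is maintained at every pass of the loop the containment holds throughout, as claimed.

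I expect the only genuine obstacle to be the bookkeeping in the inductive step, specifically checking that the edge $e(y,z)$ appended when growing $C'$ is truly a non-matching edge. The delicate case is when $y\in Y'$ is already covered and its neighbour $z$ happens to be its own $M$-partner; there the appended edge \emph{is} in $M$, but then $z$ is trivially matched and was already inserted into $C'$ as that partner, so neither the invariant nor the final claim is affected. For every other neighbour $z$ the edge $e(y,z)\notin M$ because $y$ has a unique matching partner, so the alternation is genuine; and for an uncovered root $y\in Y\setminus M$ all incident edges are automatically non-matching. Once these few cases are dispatched the argument closes cleanly, with the maximality of $M$ being the only external ingredient.
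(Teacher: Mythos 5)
Your proposal is correct and takes essentially the same approach as the paper: the paper also argues via the maximality of $M$, tracing the sequence of additions backward from a hypothetical first uncovered vertex $c_h\in C'\setminus M$ through alternating matched/unmatched edges to an uncovered root $y_0\in Y\setminus M$, and concludes that this $M$-alternating path would be augmenting, a contradiction. Your forward induction with the even/odd alternating-path invariant is just the induction-first rendering of that same backward trace (with somewhat more careful bookkeeping of the partner-edge case than the paper provides), so no further comparison is needed.
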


\begin{proof}
    Suppose the lemma is not true. Then assume that $h$ is the iteration during which the first point out of $M$, say $c_h\notin M$, is added to $C'$ as a neighbor of $y_{h-1}$. Moreover, according to the algorithm, $y_{h-1}$ is added to $Y'$ as a neighbor of some $c_{h-1}$ within $M$ in previous iterations. Consequently and repeatedly, we can obtain a sequence of points as below: 
    \begin{equation}
         c_h-y_{h-1}-c_{h-1}-\dots-y_j-c_j-\dots-y_0, \label{path-aug}
    \end{equation}
   where $y_0$ is a point in $Y'_{(0)}=Y\setminus M$. Note  that each pair $(y_j,c_j)$ indicates  an edge appearing in $M$, whereas each pair  $(y_j,c_{j+1})$ constitutes  an edge of $G(Y,C;E)$. Moreover,
   acknowledging  that both $y_0\notin M$ and $c_h\notin M$, it follows that  $M\setminus \{(y_j,c_j)\mid j=1,\dots,h-1\}\cup \{(y_j,c_{j+1})\mid j=0,\dots,h-1\}$ is a matching in $G$, but with a larger size. That is, the path outlined in Equation (\ref{path-aug}) is an augmenting path for $M$ in $G$. This contradicts the assumption that $M$ is a maximum matching in $G$. \end{proof}

\begin{lem}\label{sizeofrds}
    After the while-loop, $(Y', C')$ is an RDS in which $|Y'|-|C'|=|Y|-|M|$. 
\end{lem}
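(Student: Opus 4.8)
The plan is to track how the quantity $|Y'| - |C'|$ evolves through the while-loop and show that it remains constant at the value $|Y| - |M|$ established at initialization. First I would verify the base case at loop entry: we set $Y'_{(0)} = Y \setminus M$ and $C' = \emptyset$, so $|Y'_{(0)}| - |C'| = |Y \setminus M| = |Y| - |M|$, where the last equality uses that every matched point of $Y$ lies in $M$ and $|M|$ counts the matched pairs (equivalently the matched points on the $Y$-side). This is the target value, so the whole argument reduces to showing each iteration preserves the difference $|Y'| - |C'|$.

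The key structural fact I would exploit is Lemma~\ref{lem:CprimeinM}, which guarantees $C' \subseteq M$ throughout. Because $M$ is a matching, every center in $C'$ is matched to exactly one distinct point of $Y$, and that matched partner is precisely $N_M(C')$; hence $|N_M(C')| = |C'|$ and the map $C' \to N_M(C')$ is a bijection. In a single iteration we first enlarge $C'$ by $\Delta_j = N_G(Y') \setminus C'_{(j-1)}$ new centers, and then enlarge $Y'$ by $N_M(C') \setminus Y'_{(j-1)}$ new points. The central claim to nail down is that these two increments have the same cardinality: the newly added $Y'$-points are exactly the $M$-partners of the newly added centers $\Delta_j$. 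Since the old centers $C'_{(j-1)}$ already had their $M$-partners inside $Y'_{(j-1)}$ (this itself needs a small inductive bookkeeping step), the only genuinely new matched partners come from $\Delta_j$, and by the bijection above there are exactly $|\Delta_j|$ of them, all previously outside $Y'$ by maximality of the matching applied through Lemma~\ref{lem:CprimeinM}. Thus $|Y'|$ and $|C'|$ grow by the same amount, leaving $|Y'| - |C'|$ invariant.

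I would then handle termination: the loop exits when $C' = N_G(Y')$, which is exactly the RDS neighborhood condition, so at exit $C' = N_G(Y')$ as required by Definition~\ref{def:RDS}, and the invariant gives $|Y'| - |C'| = |Y| - |M|$. To conclude $(Y',C')$ is genuinely an RDS (not merely a candidate), I note $|Y'| - |C'| = |Y| - |M| > 0$ holds precisely when $M$ is not a perfect matching on the $Y$-side, i.e.\ when $Y \setminus M \neq \emptyset$; by Lemma~\ref{lem:feas-nece} this nonemptiness is equivalent to the existence of an RDS in the first place, so in the regime where the algorithm is invoked to produce one, $|C'| < |Y'|$ follows.

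The main obstacle I anticipate is the careful inductive accounting in the iteration step, specifically showing that the set of \emph{newly matched} $Y'$-points added via $N_M(C')$ is in exact bijection with the newly added centers $\Delta_j$ and disjoint from the current $Y'$. This requires simultaneously using that (i) older centers already contributed their partners, and (ii) no new center's $M$-partner can coincide with an unmatched seed point in $Y \setminus M$ — a fact that leans on $C' \subseteq M$ from Lemma~\ref{lem:CprimeinM} and on $M$ being a matching so partners are unique. Everything else is a clean invariant-preservation argument, but this disjointness-plus-bijection bookkeeping is where an off-by-one or a double-count could slip in, so I would state it as an explicit sub-claim and prove it before assembling the telescoping count.
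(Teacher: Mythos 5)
Your proposal is correct and follows essentially the same route as the paper's proof: an induction over the while-loop iterations showing $|Y'|-|C'|$ is invariant, using $C'\subseteq M$ from Lem.~\ref{lem:CprimeinM} to get $|N_M(\Delta_j)|=|\Delta_j|$, together with the disjointness $N_M(\Delta_j)\cap Y'_{(j-1)}=\emptyset$ (your ``bookkeeping'' sub-claim, which is exactly the paper's key equation) to conclude $|Y'_{(j)}|=|Y'_{(j-1)}|+|\Delta_j|$. Your extra remarks on the exit condition $C'=N_G(Y')$ and on needing $|Y|-|M|>0$ (via Lem.~\ref{lem:feas-nece}) for $(Y',C')$ to genuinely qualify as an RDS are points the paper leaves implicit.
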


\begin{proof}
    We analyse the size of $|Y'|-|C'|$  in iterations by induction. 
    Before the  while-loop, we have $|Y'_{(0)}|=|Y|-|M|$ and $|C'_{(0)}|=0$, so  $|Y'_{(0)}|-|C'_{(0)}|=|Y|-|M|$ obviously holds. Then, we need only to show the value of $|Y'_{(j)}|-|C'_{(j)}|$ remains unchanged for each $j$ while-loop iteration by induction.

    Assume that the lemma holds after the ($j-1$)th iteration, i.e.  $|Y'_{(j-1)}|-|C'_{(j-1)}| = |Y|-|M|$. Our task is then to show that the lemma remains valid in the $j$th iteration. 
    
    In the $j$th iteration, the algorithm adds $ N_G(Y'_{(j)})$  to $C'_{(j-1)}$ and thus obtains $C'_{(j)}$. Let $\Delta_j=C'_{(j)}\setminus C'_{(j-1)}$. Then we have  
    \begin{equation}
            N_M({\Delta_j})\cap Y'_{(j-1)}=\emptyset.\label{eq:empty}
    \end{equation}
  
    This is because if there were points in $N_M({\Delta_j}) \cap Y'_{(j-1)}$, they should have been added before the $j$th iteration, which contradicts the fact that the points of $\Delta_j$ are added in $j$th iteration. Then following Eq.~(\ref{eq:empty}), we have
\begin{equation}
    |Y'_{(j)}|= |Y'_{(j-1)}\cup N_M({\Delta_j})| = |Y'_{(j-1)}| + |N_M({\Delta_j})|. \label{eq:Y'j}
\end{equation}

    On the other hand, following Lem.~\ref{lem:CprimeinM}, we have  $\Delta_j\subseteq C'_{(j)}\subseteq M$. That is, 
       \begin{equation}
           \vert N_M({\Delta_j}) \vert = |\Delta_j|. \label{eq:number}
       \end{equation}
   Then, following Eq.~\ref{eq:Y'j}, we have 
\begin{align*}
      |Y'_{(j)}| - |C'_{(j)}| 
    &= \left(|Y'_{(j-1)}| + |N_M({\Delta_j})| \right)- \left(|C'_{(j-1)}|+|\Delta_j|\right)\\
    &= \vert Y'_{(j-1)}\vert -\vert C'_{(j-1)}\vert
\end{align*}
where the second equation is from Eq.~\ref{eq:number}.  \end{proof}

\begin{lem}\label{lem:2Algruntime}
The runtime of Alg.~\ref{alg:fastgreedy} is $O(k^{2})$.
\end{lem}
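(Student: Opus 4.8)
The plan is to decompose the running time of Alg.~\ref{alg:fastgreedy} into its three components---constructing $G(Y,C;E)$, computing the maximum matching $M$, and executing the while-loop---and to bound each by $O(k^2)$. First I would fix the governing size parameters. By the problem formulation every CL set satisfies $|Y|\le k$, and by Thm.~\ref{thm:correctnessofalg1} the center set obeys $|C|\le k$; hence the auxiliary bipartite graph of Def.~\ref{def:AG} has at most $2k=O(k)$ vertices and at most $|Y|\cdot|C|\le k^2=O(k^2)$ edges. This immediately settles Line~1: building $G$ amounts to one distance comparison $d(y,z)\le\eta$ per candidate pair $(y,z)\in Y\times C$, so the construction costs $O(k^2)$ (assuming, as usual, that each metric distance is available in $O(1)$).

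For the while-loop I would argue that, although it is written as up to $O(k)$ iterations, its total cost is only $O(V+E)=O(k^2)$, because the growth of $(Y',C')$ is monotone and can be realized as a single alternating graph traversal. Concretely, each non-terminating iteration strictly enlarges $C'$ (the loop condition $C'\neq N_G(Y')$ forces a not-yet-included neighbor to be added), so with $|C'|\le k$ there are $O(k)$ iterations; more importantly, each vertex enters $Y'$ or $C'$ at most once and never leaves, and the edges of $G$ and of $M$ incident to a vertex are scanned only when that vertex is first inserted---exactly as in a BFS/DFS launched from the initially unmatched set $Y\setminus M$. Charging each edge scan to the endpoint whose insertion triggered it yields $O(|E|+|M|)=O(k^2)$ total work for the loop, avoiding the naive $O(k)\times O(k^2)=O(k^3)$ estimate. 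Note that Lem.~\ref{lem:CprimeinM} and Lem.~\ref{sizeofrds} are invoked only for correctness; this timing step needs nothing beyond monotonicity of the two sets.

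The step I expect to be the main obstacle is Line~2, the maximum-matching computation, since it is the dominant term and the only place where the quadratic target is genuinely tight. On a bipartite graph with $O(k)$ vertices and up to $O(k^2)$ edges, the off-the-shelf bounds are $O(\sqrt{k}\,|E|)=O(k^{2.5})$ via Hopcroft--Karp and $O(k\,|E|)=O(k^3)$ via plain augmenting paths, both of which exceed $O(k^2)$. To land at $O(k^2)$ I would therefore need to exploit structure of this particular instance---for example bounding the number of augmentations, or the total re-exploration, in terms of $|E|$ rather than $\sqrt{k}\,|E|$---or else argue that in the relevant edge-sparsity regime the matching cost is absorbed into $O(k^2)$. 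Making the matching subroutine provably $O(k^2)$ is thus the crux; once that is in hand, summing the three $O(k^2)$ contributions gives the claimed overall runtime of $O(k^2)$.
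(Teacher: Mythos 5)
Your decomposition is the same as the paper's (graph construction, maximum matching, while-loop), and two of the three pieces are handled adequately: the $O(k^2)$ bound for building $G(Y,C;E)$ is correct (indeed more honest than the paper's stated $O(|C|+|Y|)$, which overlooks the pairwise distance checks but is harmless since that term is dominated), and your amortized BFS-style charging argument for the while-loop is a valid---in fact more careful---version of the paper's claim that each of $O(k)$ iterations costs $O(k)$.

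However, the maximum-matching step, which you yourself flag as the crux, is a genuine gap: you stop at Hopcroft--Karp's $O(\sqrt{|V|}\,|E|)=O(k^{2.5})$ and offer only the hope that some structure of the instance might help, so your argument as written establishes $O(k^{2.5})$, not $O(k^{2})$. The paper closes exactly this hole by invoking the bipartite-matching algorithm of van den Brand et al.~\cite{31_van2020bipartite}, which computes a maximum matching in $O(|E(G)|+|V(G)|^{\frac{3}{2}})$ time; on the auxiliary graph with $|V(G)|=O(k)$ vertices and $|E(G)|=O(k^{2})$ edges this is $O(k^{2}+k^{\frac{3}{2}})=O(k^{2})$. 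No special structure of the instance is needed---just this stronger off-the-shelf matching bound. Without it (or an equivalent result), the lemma's $O(k^{2})$ claim does not follow from your proof.
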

\begin{proof}
    Firstly, we calculate the runtime of Alg.~\ref{alg:fastgreedy}.  Clearly, Step 1 takes $O(|C|+|Y|)$ time to construct the auxiliary graph $G(Y, C; E)$. Then, Step 2 requires $O(|E(G)|+|V(G)|^\frac{3}{2})=O(k^{2})$ time to compute the maximum matching in $G$ using the algorithm by Brand et al.~\cite{31_van2020bipartite}. Moreover, the while-loop repeats for $O(k)$ iterations, with each iteration consuming $O(k)$ time. Therefore, the total runtime of Alg.~\ref{alg:fastgreedy} is $O(k^{2})$.
\end{proof}

Finally, by recalling that Alg.~\ref{alg:k-center-RDS} utilizes Alg.~\ref{alg:fastgreedy} to compute the RDS and combining this with Lem.~\ref{lem:Alg1runtime}, we deduce that the runtime of Alg.~\ref{alg:k-center-RDS} is $O(nk^3)$.

}

\begin{figure}[t]
    \centering

    \includegraphics[width=0.36\textwidth]{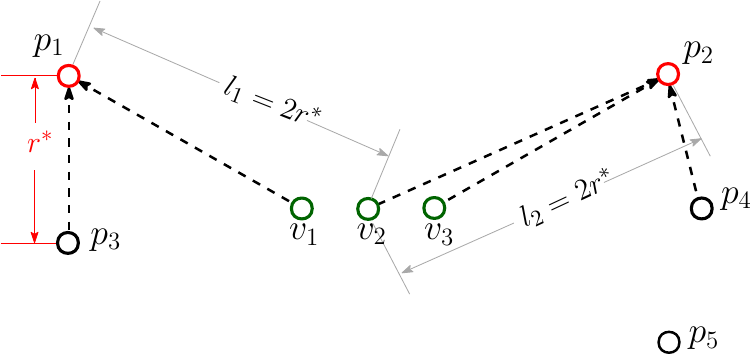}
    \caption{An example of dealing with points of $X$ individually v.s. $X$ as a ``big'' point. In the instance, $C=\{p_1,\,p_2\}$ is the current center set and  $X=\{v_1,\,v_2,\,v_3\}$ is an  ML set.   When considering $X$ as a whole  ``big'' point using the refined distance as in Def.~\ref{def:distanceforX}, we have  $\overline{d}(X,\,C)>\eta=2r^*$, i.e., the point $x$ representing $X$ should be added to $C$ as a new center. In contrast, when using the traditional distance metric,  $x$ (representing $X$) will not be added to C as a new center, since the traditional distances from  $X$  to $p_1$ and $p_2$  are both bounded by $\eta=2r^*$, leading to a radius larger than  $\eta=2r^*$ regardless $X$ is assigned to $p_1$ or $p_2$. }
    \label{fig:MLcounterexamples}
\end{figure}

\section{Whole Algorithm for ML/CL \texorpdfstring{$k$}{k}-Center}
\label{sec:whole}

We will first demonstrate that Alg.~\ref{alg:k-center-RDS} can be extended to approximate $k$-center with both ML and CL constraints, and then show that the algorithm can be easily adjusted by employing binary search when the value of $r^*$ is unknown.

\subsection{ML/CL \texorpdfstring{$k$}{k}-Center with \texorpdfstring{$r^\ast$}{r*}}
For ML/CL \textit{k}-center, the key idea is to contract the ML sets into a set of ``big'' points. Specifically, for each set $X$, we remove all the points from $P$ that belong to $X$ and replace them with a ``big'' point $x$. Then, for distances involving the points resulting from this contraction, we use the following definition:
\begin{defn}
\label{def:distanceforX}
For two points  $x_i$ and $x_j$ that  result from contraction, corresponding to $X_i$ and $X_j$ respectively, the refined distance between them is defined as
\[\hat{d}(x_i,\,x_j)=\hat{d}(X_i,\,X_j)=\max_{p\in X_i,\,q\in X_j}d(p, q).\]
\end{defn}

Treating a single point as a singleton ML set, we can calculate the distance between $x$ (representing $X$) and a point $q\notin X$ using Def.~\ref{def:distanceforX} as follows:
\[\hat{d}(x,\,q)=\hat{d}(X,\,q)=\max_{p\in X}d(p, q).\]

\begin{lem}
    The distance $\hat{d}$ defined in Def.~\ref{def:distanceforX} satisfies the triangle inequality. That is, for any three points $x_1$, $x_2$ and $x_3$ that respectively representing three ML sets $X_1$ $X_2$ and $X_3$, we have:
    \[\hat{d}(x_{i_1},x_{i_2})\leq \hat{d}(x_{i_1},x_{i_3}) +\hat{d}(x_{i_2},x_{i_3}),\]
    where $i_1, i_2, i_3\in \{1,2,3\}$ and $i_1\neq i_2 \neq i_3$.
\end{lem}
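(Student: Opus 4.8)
The plan is to reduce the inequality for the set distance $\hat d$ directly to the triangle inequality of the underlying metric $d$, by introducing a single intermediary point drawn from the third ML set. For brevity, fix the permutation and write $A=X_{i_1}$, $B=X_{i_2}$, and $D=X_{i_3}$; the goal is then $\hat d(A,B)\le \hat d(A,D)+\hat d(B,D)$. Because the three indices play symmetric roles, establishing this single instance establishes all of the permuted forms.

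First I would use the fact that each ML set is a finite (hence nonempty and compact) subset of $P$, so the maximum defining $\hat d(A,B)=\max_{p\in A,\,q\in B}d(p,q)$ is actually attained, say at a witness pair $(a^\ast,b^\ast)$ with $a^\ast\in A$ and $b^\ast\in B$. Thus $\hat d(A,B)=d(a^\ast,b^\ast)$, and the left-hand side is now an ordinary metric distance between two concrete points.

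The key step is then to choose any fixed point $c_0\in D$ (possible since $D\neq\emptyset$) and apply the triangle inequality of $d$ to the three points $a^\ast$, $c_0$, $b^\ast$, yielding $d(a^\ast,b^\ast)\le d(a^\ast,c_0)+d(c_0,b^\ast)$. Each summand is then bounded by a maximum over the relevant pair of sets: since $a^\ast\in A$ and $c_0\in D$ we have $d(a^\ast,c_0)\le \max_{p\in A,\,q\in D}d(p,q)=\hat d(A,D)$, and symmetrically $d(c_0,b^\ast)\le \hat d(B,D)$. Chaining these gives $\hat d(A,B)=d(a^\ast,b^\ast)\le \hat d(A,D)+\hat d(B,D)$, which is exactly the claim.

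The argument is essentially immediate, so there is no serious computational obstacle; the single subtlety to respect is that one common intermediary $c_0$ must serve both summands simultaneously. It would be a mistake to optimize the intermediate point separately inside each of the two maxima, since the metric triangle inequality only delivers a bound through a shared middle point. This causes no difficulty here, because $\hat d(A,D)$ and $\hat d(B,D)$ are maxima over all cross pairs, so any fixed $c_0\in D$ is automatically dominated by both, requiring no optimization at all. As minor checks one should note that $D\neq\emptyset$ (every ML set contains at least one original point) and that $\hat d$ is not a full metric, since $\hat d(X,X)=\operatorname{diam}(X)$ need not vanish; only the triangle inequality is asserted and used in the sequel.
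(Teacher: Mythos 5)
Your proof is correct and follows essentially the same route as the paper: fix a witness pair attaining $\hat d(A,B)$, pass the underlying triangle inequality of $d$ through a single shared point of the third set, and bound each resulting term by the corresponding set-maximum. The only (cosmetic) difference is that the paper routes the argument through a specific intermediary, the point $u_{i_3}^{i_2}\in X_{i_3}$ farthest from $X_{i_2}$, whereas you correctly observe that an arbitrary point of the third set suffices, which slightly streamlines the same argument.
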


\begin{proof}Let $u_i^j= {\arg\max}_{v\in X_{i}} \hat d(X_{j},v)$. That is, $u_i^j$ is the point in $X_i$ that attains the maximum refined distance from $X_{j}$, i.e.,  $\hat d(X_{j},u_i^j)$ attains a maximum.  Conversely, we have the notation $u_j^i$. Then  $u_i^j\in X_i$ and $u_j^i\in X_j$ are two points of $X_i$ and $X_j$ for which $d(u_i^j, u_j^i)$ attains maximum.

According to the nature of Def.~\ref{def:distanceforX}, we have 
\begin{align}
    \hat{d}(x_{i_1},x_{i_3}) = d(u_{i_1}^{i_3},u_{i_3}^{i_1}) \geq d(u_{i_1}^{i_2},u_{i_3}^{i_2})\label{eq:x13}
\end{align} 
and 
\begin{align}
\hat{d}(x_{i_2},x_{i_3}) = d(u_{i_2}^{i_3},u_{i_3}^{i_2}) \geq d(u_{i_2}^{i_1},u_{i_3}^{i_2})\label{eq:x23}
\end{align}

Combining the two inequations above, we have
\begin{align*}
\hat{d}(x_{i_1},x_{i_3}) +\hat{d}(x_{i_2},x_{i_3}) & = d(u_{i_1}^{i_3},u_{i_3}^{i_1})+  d(u_{i_2}^{i_3},u_{i_3}^{i_2}) \\
&\geq  d(u_{i_1}^{i_2},u_{i_3}^{i_2}) + d(u_{i_2}^{i_1},u_{i_3}^{i_2}) \\
&\geq d(u_{i_1}^{i_2}, u_{i_2}^{i_1}) = \hat{d}( x_{i_1},  x_{i_2}),
\end{align*}
where the first inequality is obtained by combining Inequality (\ref{eq:x13}) and (\ref{eq:x23}), the second inequality is from the fact that the original $d$ satisfies the \textit{triangle inequality}. Therefore, the refined distance $\hat{d}$ also satisfies the \textit{triangle inequality}. 
\end{proof}

Then the distance between $x$ and the  center set $C$ is defined as:
\[\overline{d}(x,\,C)=  \min_{q\in C}\hat{d}(X,q) =  \min_{q\in C}\max_{p\in X}d(p,q).\]
By Def.~\ref{def:distanceforX}, we can simply extend Alg.~\ref{alg:k-center-RDS} to solve ML/CL $k$-center. Note that Def.~\ref{def:distanceforX}  is essential for obtaining a desirable solution, as simple heuristic algorithms, without using Def.~\ref{def:distanceforX}, can not lead to desirable solutions even for ML $k$-center alone (as illustrated in Fig.~\ref{fig:MLcounterexamples}).

\begin{algorithm}[t]
\small
\SetAlgoLined
\caption{Whole algorithm for ML/CL $k$-center.}
\label{alg:whole}
\KwIn{Database $P$ of size $n$ with ML sets ${\mathcal{X}}$ and CL sets ${\mathcal{Y}}$ and a positive integer $k$.}
\KwOut{A set of centers $C$.}

Initialization:  Shrink each ML set  $X\in\cal X$ as $x$\;

Compute $\Psi=\{d(p_i,p_j)\mid p_i,\,p_j\in P\}$, the set of distances between each pair points of $P$\;

 \While{true}{
 \tcc{\footnotesize{Each while-loop computes a center set $C$ and  exclude part of the elements from $\Psi$  according  to  $\eta$ and the size of $C$.}}
  Assign the value of the median of $\Psi$ to $\eta$\;
  
  Set $C\leftarrow \emptyset$, $C'\leftarrow\emptyset$ and  $Y'\leftarrow\emptyset$\; 

 Call Alg.~\ref{alg:k-center-RDS} with $\eta$ to compute the center set $C$, except for computing the distances concerning the shrunken point $x$  according to Def.~\ref{def:distanceforX}\;
       
    \eIf{$|C|>k$}{
        Remove each $d \leq \eta$ (except $\eta$) from $\Psi$\;
    }{
        Remove each $d\geq  \eta $ (except $\eta $) from $\Psi$\;
    }
    \tcc{\footnotesize{Note that $|C|\leq k$ indicates $\eta $ is sufficiently large, while $|C|>k$ indicates otherwise.}}
    \If{$|\Psi|= 1$}{
        Return $\eta$ together with the corresponding $C$.
    }
 }
\end{algorithm}

\subsection{Dealing with \texorpdfstring{$r^\ast$}{r*}}

We show the same ratio can be achieved even without knowing $r^*$ based on the following observation:

\begin{lem}
\label{lem:rlays}
Let $\Psi=\{d(p_i,p_j)\vert p_{i},p_{j}\in P\}$ and $r^*$ be optimum radius. Then we have $r^{*}\in \Psi$, i.e., the value $r^*$  must equal to a distance within $\Psi$.
\end{lem}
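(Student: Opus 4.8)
The plan is to show that the optimal radius $r^*$ must coincide with one of the pairwise distances in $\Psi$. The key structural fact I would exploit is that in $k$-center clustering (constrained or not), the optimal radius is determined by an extremal point-to-center distance, and since centers are drawn from the point set $P$ itself, every such distance is realized as a distance between two points of $P$.

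\begin{proof}
Let $C^*$ be an optimal center set achieving radius $r^*$, so that $C^* \subseteq P$, $|C^*| \le k$, and $r^* = \max_{p\in P} d(p, C^*)$. Since the maximum is attained over the finite set $P$, there exists a point $p_0 \in P$ realizing it, i.e. $d(p_0, C^*) = r^*$. Writing $d(p_0, C^*) = \min_{c\in C^*} d(p_0, c)$, this minimum is also attained at some center $c_0 \in C^*$, so $r^* = d(p_0, c_0)$. Because both $p_0 \in P$ and $c_0 \in C^* \subseteq P$, the value $d(p_0, c_0)$ is the distance between two points of $P$, and therefore $d(p_0, c_0) \in \Psi$. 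Hence $r^* \in \Psi$.
\end{proof}

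The heart of the argument is the observation that centers are chosen from $P$, which forces the optimal radius to be a point-to-point distance rather than an arbitrary real number; this is exactly what makes the binary search over $\Psi$ in Alg.~\ref{alg:whole} legitimate, since $\Psi$ is a finite candidate set of size $O(n^2)$ guaranteed to contain $r^*$. I anticipate no genuine obstacle here, as the statement is a standard discretization fact for $k$-center; the only point requiring a small amount of care is confirming that the constrained setting does not change the conclusion. This holds because the feasibility constraints only restrict \emph{which} center sets $C^*$ are admissible — they never alter the fact that an admissible $C^*$ lies in $P$ and that its radius is a realized point-to-center distance. In the ML-contracted setting one would additionally note that the refined distances $\hat d$ of Def.~\ref{def:distanceforX} are themselves maxima of original distances $d(p,q)$ with $p,q\in P$, so any radius measured under $\hat d$ still equals some element of $\Psi$; this keeps the same candidate set valid throughout Alg.~\ref{alg:whole}.
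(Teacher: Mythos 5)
Your proof is correct, but it takes a different route from the paper's. You argue directly by attainment: the optimum is achieved by some feasible solution, its cost is realized at a maximizing point $p_0$ and the center serving it, and since centers come from $P$ this cost is literally an element of $\Psi$. The paper instead argues by contradiction with a rounding-down step: supposing $r^*\notin\Psi$, it sets $r=\max\{d\in\Psi \mid d<r^*\}$ and observes that every point--center pair within distance $r^*$ is in fact within distance $r$, so coverage is unchanged and $r<r^*$ would be a feasible radius, contradicting optimality. Your argument is shorter and more elementary; the paper's buys a slightly different (and arguably more useful) invariant, namely that feasibility of a threshold is piecewise constant between consecutive values of $\Psi$, which is exactly the property that legitimizes the binary search over $\Psi$ in Alg.~\ref{alg:whole} and would survive even if the optimum were defined as an infimum over real thresholds rather than the cost of a realized solution. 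One small repair to your write-up: in the constrained problem the maximizing point need not be served by its \emph{nearest} center (a CL constraint can forbid that assignment), so the identity $r^*=\max_{p\in P}\min_{c\in C^*}d(p,c)$ is not the right definition here; you should instead write $r^*=\max_{p\in P} d\bigl(p,\sigma^*(p)\bigr)$ for an optimal constraint-satisfying assignment $\sigma^*$ and take $c_0=\sigma^*(p_0)\in C^*\subseteq P$. Your conclusion is unaffected, and your closing remarks (that constraints only restrict which solutions are admissible, and that the refined distances $\hat d$ of Def.~\ref{def:distanceforX} are themselves realized as distances $d(p,q)$ with $p,q\in P$) correctly cover the constrained and ML-contracted cases.
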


\begin{proof}
    Suppose the lemma is not true. Clearly, there must exist at least a distance in  $\Psi$  not larger than $r^*$, since otherwise, any pair of points cannot be in the same cluster. Let $r=\max\{d_{ij}\mid d\in\Psi,\,d<r^*\}$. Clearly, under the distances $r$ and $r^*$, every center can exactly cover the same set of points. This is because if a point is covered by center $i$ under distance  $r^*$, it can also be covered by  $i$ under distance $r$ (since $d_{ij}\leq r$ if $d_{ij}<r^*$), and vice versa. Therefore, we can use $r < r^*$ as a smaller clustering radius, leading to a contradiction.
\end{proof}

That is, we need only to find the smallest $r\in \Psi$, such that regarding $2r$, Alg.~\ref{alg:k-center-RDS}  can successfully return $C$ with $\vert C\vert\leq k$.  By employing a binary search on the distances in $\Psi$, we can find in $O(\log n)$ iterations the smallest $r$ under which Alg.~2 can find a feasible solution. Therefore, by combining the two aforementioned techniques, the whole algorithm for $k$-center with both ML and CL sets but without known $r^*$ is depicted in Alg.~\ref{alg:whole}. Eventually, we have  runtime and performance guarantees for  Alg.~\ref{alg:whole} as follows:

\begin{thm}
\label{thm:MLCLratio}
Alg.~\ref{alg:whole} solves the ML/CL $k$-center within runtime $O(nk^{3}\log n)$ and outputs a center set $C$, such that: (1) $d(p,\sigma (p))\leq 2r^*$ holds for $\forall p\in P$ where $\sigma (p)$ is the center for $p$ in $C$; (2) All the ML and CL constraints are satisfied; (3) $|C|\leq k$.
\end{thm}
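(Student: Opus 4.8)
The plan is to establish Theorem~\ref{thm:MLCLratio} by combining three essentially separate arguments, one for each of the three runtime and correctness claims, each of which leverages the machinery already developed in the excerpt. The overall strategy is to show that Alg.~\ref{alg:whole} is correct provided its inner call to Alg.~\ref{alg:k-center-RDS} is correct on the contracted instance, and that the binary search over $\Psi$ locates the right threshold without asymptotic overhead.

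First I would handle the runtime bound $O(nk^3\log n)$, which is the most mechanical part. The preprocessing computes $\Psi$, containing $O(n^2)$ pairwise distances; each while-loop iteration selects the median of $\Psi$, calls Alg.~\ref{alg:k-center-RDS}, and discards (at least) half of $\Psi$. Since the excerpt already establishes (via Lem.~\ref{lem:Alg1runtime} together with Lem.~\ref{lem:2Algruntime}) that a single invocation of Alg.~\ref{alg:k-center-RDS} runs in $O(nk^3)$ when Alg.~\ref{alg:fastgreedy} computes the RDS, I need only argue that the binary search halves $|\Psi|$ each round and therefore terminates in $O(\log |\Psi|)=O(\log n)$ iterations, giving $O(nk^3\log n)$ total. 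A minor point to address is that the contraction step and the refined-distance computations of Def.~\ref{def:distanceforX} do not dominate this bound.

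Next I would prove correctness conditions (1)--(3). The cleanest route is to reduce the ML/CL instance to a pure CL instance on the contracted point set, and then invoke Thm.~\ref{thm:correctnessofalg1} verbatim. The key observations are: contracting each ML set $X$ into a single ``big'' point $x$ automatically enforces every ML constraint, since all points of $X$ are then assigned to a common center; the refined distance $\hat d$ still satisfies the triangle inequality (by the preceding lemma), so $(P,\hat d)$ remains a metric space on which Alg.~\ref{alg:k-center-RDS} applies unchanged; and covering a big point $x$ within $\overline d(x,C)\le 2r^*$ is, by the definition $\overline d(x,C)=\min_{q\in C}\max_{p\in X}d(p,q)$, exactly equivalent to covering every original point of $X$ within $2r^*$ by a single common center. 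Granting that the binary search has located the smallest $r\in\Psi$ for which Alg.~\ref{alg:k-center-RDS} returns $|C|\le k$, conditions (1) and (3) then follow directly from Thm.~\ref{thm:correctnessofalg1} conditions (1) and (3) applied at threshold $\eta=2r\le 2r^*$, while condition (2) is the CL-feasibility guarantee of that same theorem, and the ML constraints hold by construction of the contraction.

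The main obstacle I expect is justifying that the binary search actually identifies a threshold $r\le r^*$ under which Alg.~\ref{alg:k-center-RDS} succeeds with $|C|\le k$, and that this is the correct stopping criterion. This rests on a monotonicity property that must be argued carefully: larger $\eta$ yields a denser auxiliary graph, hence fewer RDS augmentations and a smaller output center set, so feasibility ($|C|\le k$) is monotone in $\eta$. Combined with Lem.~\ref{lem:rlays}, which guarantees $r^*\in\Psi$, this ensures that the binary search's comparison $|C|>k$ versus $|C|\le k$ correctly prunes $\Psi$ and converges to the least feasible $r\in\Psi$, and that this $r$ satisfies $r\le r^*$. I would verify that at this $r$ the threshold $\eta=2r$ fed to Alg.~\ref{alg:k-center-RDS} meets the hypothesis of Thm.~\ref{thm:correctnessofalg1}, so that its guarantees transfer; the remaining steps are then routine bookkeeping.
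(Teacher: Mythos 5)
Your proposal follows essentially the same route as the paper's proof: the runtime bound is obtained exactly as in the paper, by combining Lem.~\ref{lem:Alg1runtime} and Lem.~\ref{lem:2Algruntime} into an $O(nk^3)$ bound per call and multiplying by the $O(\log n)$ binary-search factor; and conditions (1)--(3) are obtained by contracting ML sets via Def.~\ref{def:distanceforX}, transferring the guarantees of Thm.~\ref{thm:correctnessofalg1} (in particular a version of Lem.~\ref{lem:bound_k} under the refined distance $\hat d$) to the contracted CL-only instance, and using Lem.~\ref{lem:rlays} to argue that the returned threshold satisfies $\eta\le 2r^*$. The paper states this more tersely (condition (2) is declared ``obviously satisfied'' and condition (3) follows from ``a similar version of Lem.~\ref{lem:bound_k}''); your explicit checks that $\hat d$ is a metric and that covering a contracted point within $2r^*$ is equivalent to covering all of $X$ by one common center are elaborations of what the paper leaves implicit, and they are correct.

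The one place where you deviate---and where your plan as written would get stuck---is the justification of the binary search. You propose to prove that the output size of Alg.~\ref{alg:k-center-RDS} is monotone non-increasing in $\eta$ (``denser auxiliary graph, hence fewer RDS augmentations, hence smaller output''). That inference does not go through step-by-step: under a different threshold the algorithm finds different maximum RDSs, swaps different centers in and out, and follows a different trajectory, so the two executions cannot be compared augmentation-by-augmentation; global monotonicity of $|C|$ in $\eta$ is a stronger statement than anything the paper establishes, and it is not needed. What suffices (and what the paper implicitly relies on when it asserts that the search ``finds the minimum $r$'' with $\eta\le 2r^*$) is the one-sided guarantee: for \emph{every} threshold $\eta\ge 2r^*$, Alg.~\ref{alg:k-center-RDS} returns $|C|\le k$. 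This follows verbatim from the inductive argument of Lem.~\ref{lem:bound_k}---points of $C\setminus C'$ and $Y'$ are more than $\eta\ge 2r^*$ apart, hence lie in distinct optimal clusters---applied at threshold $\eta$ rather than exactly $2r^*$. With this property, a short invariant argument (any median declared infeasible is necessarily $<r^*$, so pruning below it discards only values $<r^*$; any median declared feasible is itself retained) shows that $\Psi$ always contains a value that is both feasible and at most $r^*$, so the value returned when $|\Psi|=1$ satisfies $r\le r^*$, i.e., $\eta\le 2r^*$. Replacing your monotonicity step by this observation completes your proof and brings it in line with what the paper's (itself rather compressed) argument actually uses.
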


\begin{proof}

The runtime analysis of Thm.~\ref{thm:MLCLratio} can be easily derived from Lem.~\ref{lem:2Algruntime} by including a multiplicative factor of $O(\log n)$ to account for the binary search.

To prove the latter part of Thm.~\ref{thm:MLCLratio}, we follow a similar line as in the proof of Thm.~\ref{thm:correctnessofalg1} but with more sophisticated details.
First, Condition (1)  holds as analyzed below. On the one hand, following Lem.~\ref{lem:rlays}, $r^{*}\in \Psi$ must hold. On the other hand,  the algorithm employs binary search to find the minimum $r\in\Psi$ and uses $\eta=2r$ as the threshold to call Alg.~\ref{alg:k-center-RDS} (Step 6). That is, $\eta\leq 2r^*$. For  Condition (2), the  CL and ML constraints are obviously satisfied according to the procedure of the algorithm.
Lastly, for Condition (3), we can easily get a similar version of Lem.~\ref{lem:bound_k} in which the claim $|C|\leq k$ remains true. That is because the centers of $C$ that are produced by our algorithm must appear in pairwise different clusters of an optimal solution, provided that the distance involving an ML set $X\in \cal X$ is processed according to the rules as in Def.~\ref{def:distanceforX}. Therefore, we have the correctness of the theorem.
\end{proof}

\begin{figure*}[t]
    \centering
    \captionsetup{justification=centering}
    \subfloat[Disjoint ML/CL]{
        \includegraphics[width=\textwidth]{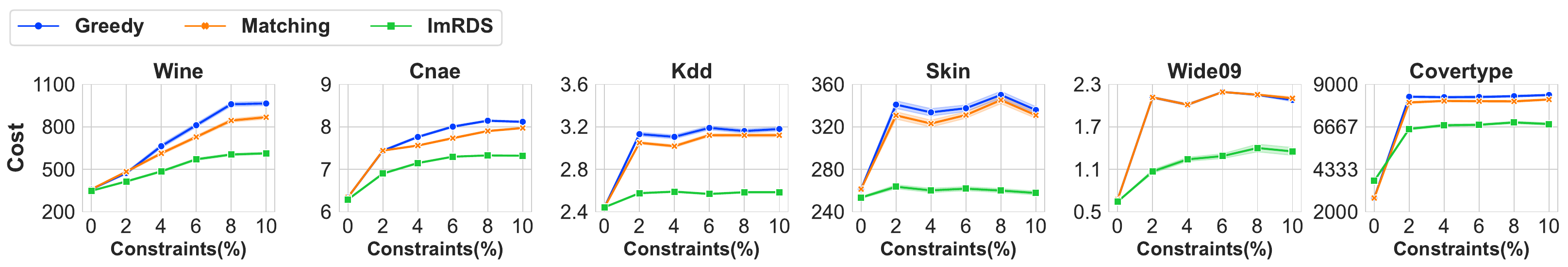}
        \label{subfig:Real-world-Data-Cost-disjoint}
    }
    \hfill
    \subfloat[No control on the size/ratio of intersected ML/CL.]{
        \includegraphics[width=\textwidth]{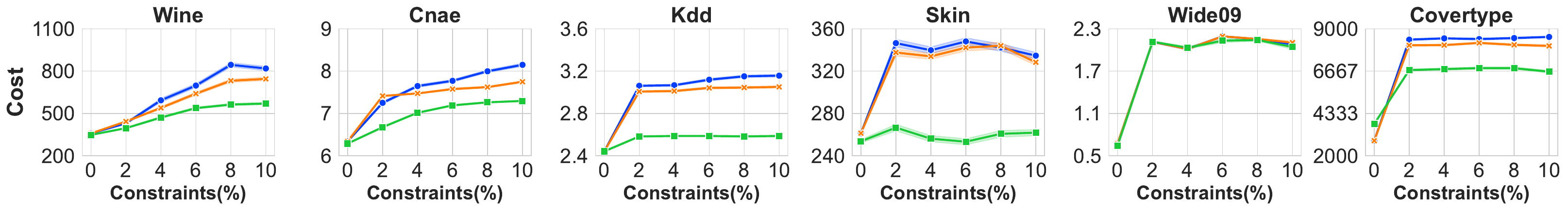}
        \label{subfig:Real-world-Data-Cost-intersected_0}
    }
    \hfill
    \subfloat[5\% of all data points are constrained ML/CL.]{
        \includegraphics[width=\textwidth]{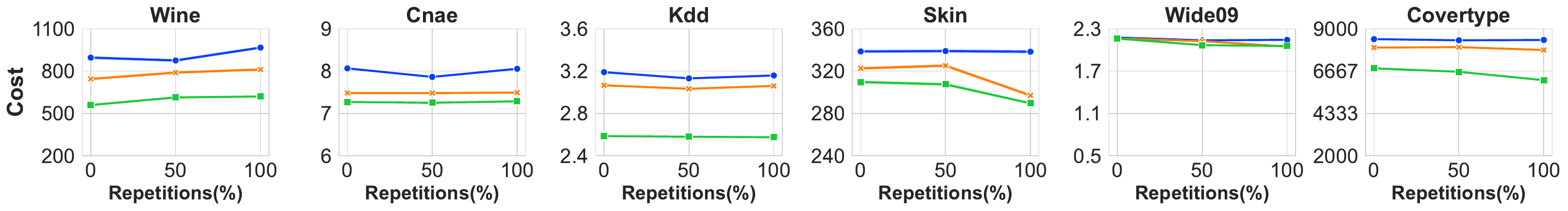}
        \label{subfig:Real-world-Data-Cost-intersected_5}
    }
    \hfill
    \subfloat[10\% of all data points are constrained ML/CL.]{
        \includegraphics[width=\textwidth]{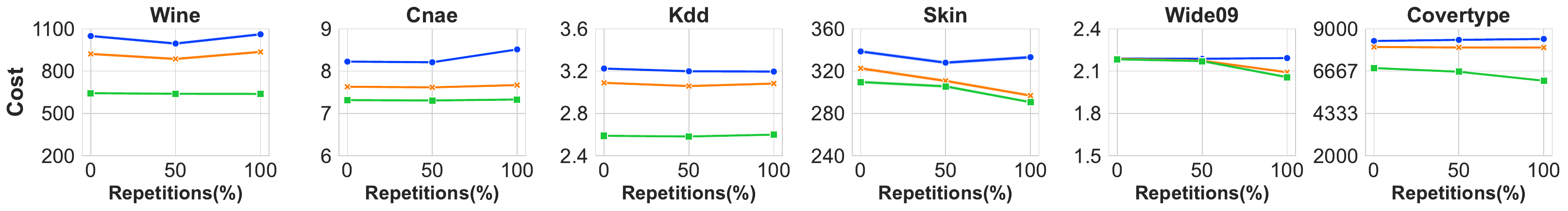}
        \label{subfig:Real-world-Data-Cost-intersected_10}
    }
    \caption{Cost.}
    \label{fig:Real-world-Data-Cost-intersected}
\end{figure*}

\section{Experimental Evaluation}
\label{sec:Exp}

\begin{table*}[t]
\centering
\caption{Datasets statistics.\label{tab:datasets}}
\begin{tabular}{l|c|c|c|l|c}
\toprule[1pt]
\textbf{Datasets} & \textbf{\#Records} & \textbf{\#Dim.} & $\mathbf{k}$ & \textbf{Clusters sizes in ground truth} &  \textbf{SD/Means}\\
\midrule[1pt]
{Wine}  & 178 & 13 & 3 & [59,71,48] & 11.50/59.33\\ 
{Cnae-9 } & 1,080 & 856 & 9 &[120,120,120,120,120,120,120,120,120] & 0.0/120\\ 
{NLS-KDD} \ & 22,544 & 41 & 2& [12833,9711] & 2207.59/11272\\
{Skin}  & 245,057 & 3 & 2 & [50859,194198] & 101355.98/122528.5 \\ 
{Wide09 } & 570,223 & 21 & 13 & [340390,158902,70141,676,97,5,4,3,1,1,1,1,1] & 100425.58/43863.31\\
{Covertype} & 581,012 & 54 & 7 & [211840,283301,35754,2747,9493,17367,20510] & 114752.58/3001.71\\
\midrule[1pt]
\end{tabular}
\end{table*}

\subsection{Experimental Configurations}
\label{sec:Exp_config}
This subsection includes a brief description of the experimental configurations as stated below.

\descr{Real-world datasets.} 
We follow existing studies on constrained clustering~\cite{18_wagstaff2001constrained,33_malkomes2015fast} to use the four UCI datasets \footnote{\url{http://archive.ics.uci.edu/ml}} (Wine, Cnae-9, Skin and Covertype) and two famous network traffic datasets (KDDTest+ of NLS-KDD~\footnote{\url{https://kdd.ics.uci.edu/databases/kddcup99/kddcup99.html}} and Wide09 of MAWI~\footnote{\url{https://mawi.wide.ad.jp/mawi/}}) to evaluate the algorithms on the Internet traffic classification problem.

\begin{figure*}[ht]
    \centering
    \captionsetup{justification=centering}
    \subfloat[Disjoint ML/CL]{
        \includegraphics[width=\textwidth]{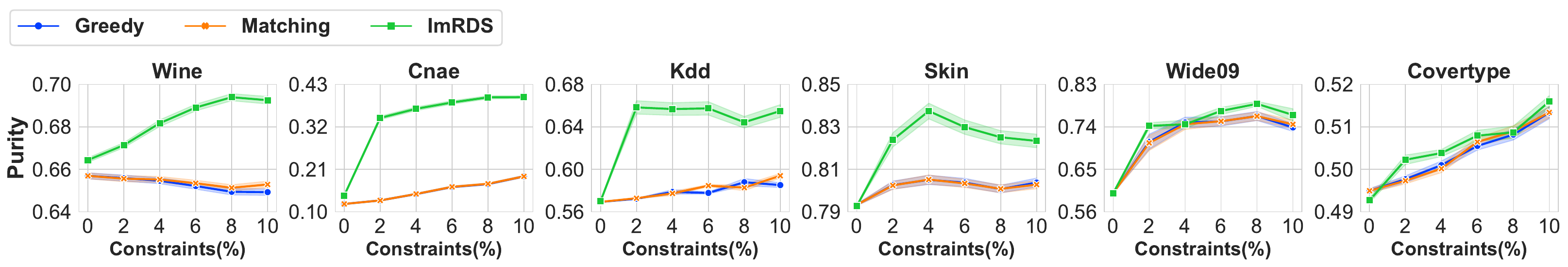}
        \label{subfig:Real-world-Data-Purity-disjoint}
    }
    \hfill
    \subfloat[No control on the size/ratio of intersected ML/CL.]{
        \includegraphics[width=\textwidth]{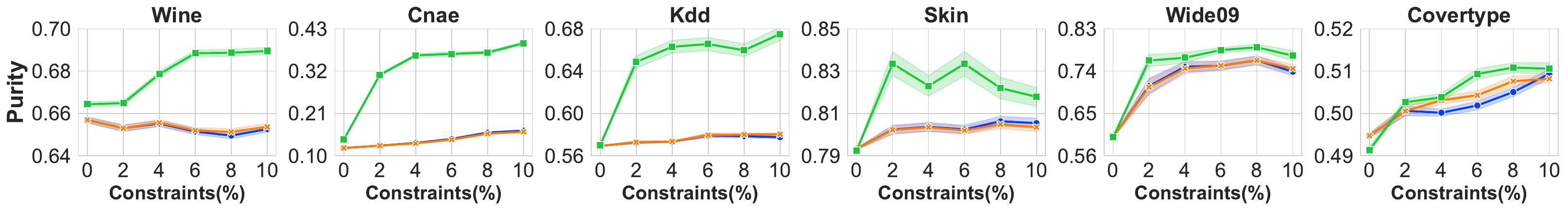}
        \label{subfig:Real-world-Data-Purity-intersected_0}
    }
    \hfill
    \subfloat[5\% of all data points are constrained ML/CL.]{
        \includegraphics[width=\textwidth]{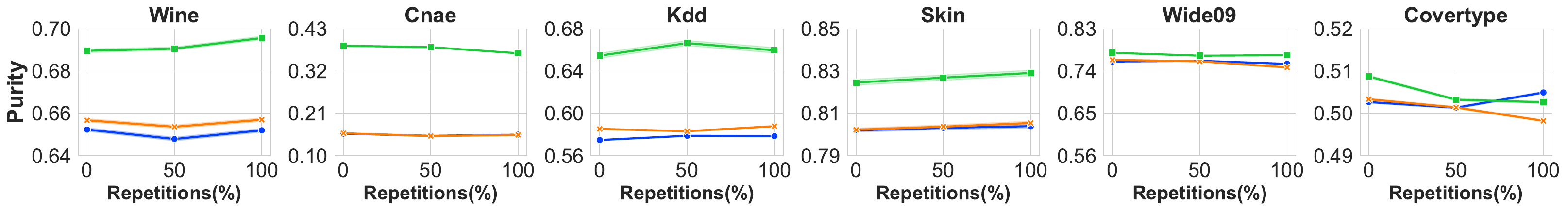}
        \label{subfig:Real-world-Data-Purity-intersected_5}
    }
    \hfill
    \subfloat[10\% of all data points are constrained ML/CL.]{
    \includegraphics[width=\textwidth]{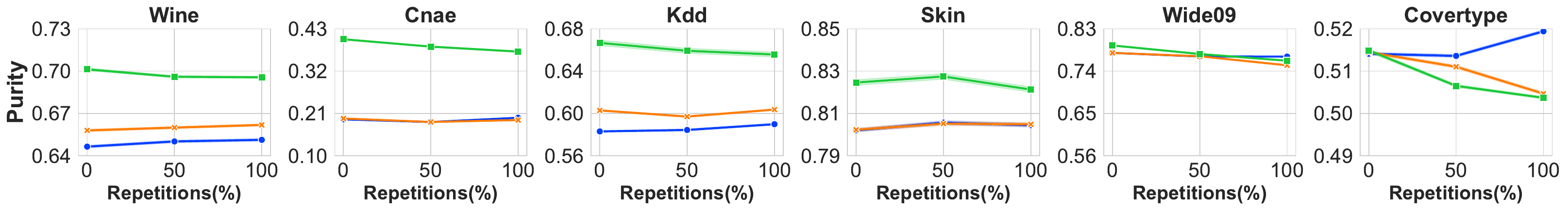}
        \label{subfig:Real-world-Data-Purity-intersected_10}
    }
    \caption{Purity.}
    \label{fig:Real-world-Data-Purity-intersected}
\end{figure*}

\descr{Simulated datasets.} It is challenging for us to evaluate the practical approximation ratio of Alg.~\ref{alg:whole} on real-world datasets due to the lack of optimal solution costs as a benchmark for the constrained $k$-center problem. To overcome this problem, we construct simulated datasets using given parameters ($k$, $n$, and $r^*$), where $k$ is the number of clusters, $n$ is the number of data samples, and $r^{*}$ is the pre-defined optimal radius for the constrained $k$-center problem. Tab.~\ref{tab:datasets} gives brief statistics of the aforementioned datasets.

\descr{Constraints construction.} 
We construct both disjoint and intersected ML/CL constraints for the real-world and the simulated datasets in accordance with Sections~\ref{sec:Intro} and \ref{sec:Alg}, to evaluate the clustering performance of our approximation algorithm against baselines. In short, for a given dataset, a given number of constrained data points, and a given number of participants (who have their own background knowledge of ML/CL), we both uniformly and biased sample data points from the raw dataset into different ML and CL sets. 

\descr{Algorithms.}
Since this is the \emph{first non-trivial} work with a $2$-opt algorithm for the constrained $k$-center problem with disjoint CL sets, we propose two baseline algorithms - a greedy algorithm (Greedy) and a matching-based algorithm (Matching). In brief, Greedy is adapted from a constrained $k$-means algorithm~\cite{18_wagstaff2001constrained} to handle the CL sets while considering the ML constraints as ``big" points, and matching is a simple improvement of Greedy by overall matching points to closer centers that incurs a smaller covering radius.  

All our algorithms operate within the framework of Alg.~\ref{alg:whole} (denoted by Alg.~\ref{alg:whole}), each requiring the computation of a maximum RDS. To compute such a maximum RDS, we introduce two primary algorithms: an LP-based method discussed in Subsec.~\ref{subsec:Efficient} (denoted by LP) and another method based on Alg.~\ref{alg:fastgreedy} described in Subsec.~\ref{subsec:greedy_improve_RDS} (denoted by BaRDS). {Additionally, to further enhance runtime on the large $k$, we developed an approach by refining the computation of RDS (denoted by ImRDS).} Since all three algorithms compute a maximum RDS using the same input and exhibit equivalent performance in terms of clustering accuracy, we present the clustering accuracy of only one (denoted as Alg.~\ref{alg:whole}) for comparison against the baselines.

\descr{Evaluation metrics.} Following existing studies on clustering~\cite{11_bera2022fair,34_wang2014unsupervised,35_rand1971objective}, we use the common clustering quality metrics in the experiments, which are \textit{Cost}~\cite{11_bera2022fair}, \textit{Purity}~\cite{34_wang2014unsupervised}, \textit{Normalized Mutual Information} (NMI), and \textit{Rand Index} (RI)~\cite{35_rand1971objective}. For \textit{runtime}, we report it in the base ten logarithms.

Let $S=\{S_{1}, \dots, S_{k}\}$ and $L=\{L_{1}, \dots, L_{k}\}$ be the set of sets of data samples clustered by their ground truth labels and clustering algorithms, respectively.

\textbf{Cost}~\cite{11_bera2022fair} is the most commonly used metric for evaluating the clustering performance. In this paper, we define the clustering cost as the maximum distance (radius) between any data sample and its assigned center.
\begin{equation}
    \text{cost} = r = \max_{p\in P} d_{c}(p,C),
\end{equation}
where the algorithm selects $C$, and $d_{c}(\cdot,\cdot)$ denotes the distance that is satisfied the constrained relationship from $p$ to its assigned center $c_p \in C$.

\textbf{Purity}~\cite{34_wang2014unsupervised} measures the extent to which clusters contain a single cluster, and it is defined as: 
\begin{equation}
    \text{Purity} = \frac{1}{n} \sum^{k}_{i} \max_{j} |S_{i}\cap L_{j}|.
\end{equation}

\textbf{Normalized Mutual Information} (NMI)~\cite{34_wang2014unsupervised} is a normalized variant of mutual information (MI), which measures the mutual dependence between the clustering results and the ground truth, considering both homogeneity and completeness. The NMI measure is defined as:
\begin{equation}
    \text{NMI} = \frac{\text{MI}(S,L)}{(\text{H}(S)+\text{H}(L))/2},
\end{equation}
where $\text{MI}(X,Y) = \text{H}(X)-\text{H}(X|Y) $ is the mutual information score, $\text{H}(X)=-\sum^{k}_{i=0} P(X_i)\log P(X_i)$ is the Shannon entropy of $X$, and $\text{H}(X|Y)=-\sum^{k}_{i=0} P(X_i)\sum^{k}_{j=0}P(Y_j|X_i) $ is the conditional entropy of $X$ given $Y$.

\textbf{Rand Index} (RI)~\cite{18_wagstaff2001constrained} is a measure of the similarity between pair-wise point clusterings, which is defined by \cite{35_rand1971objective} as:
\begin{equation}
    \text{RI}=\frac{a+b}{n*(n-1)/2},
\end{equation}
where $a$ is the number of pairs of points that have the same label in $S$ and $L$, $b$ is the number of pairs of points that have different labels in $S$ and $L$, and $n$ is the number of all points.

\descr{Implementation details.}
We implemented all algorithms in Java 1.8.0 on a Mac computer equipped with an Apple M1 Max CPU and 32 GB of RAM. We

\begin{figure*}[!ht]
    \centering
    \captionsetup{justification=centering}
    \subfloat[Disjoint ML/CL]{
        \includegraphics[width=\textwidth]{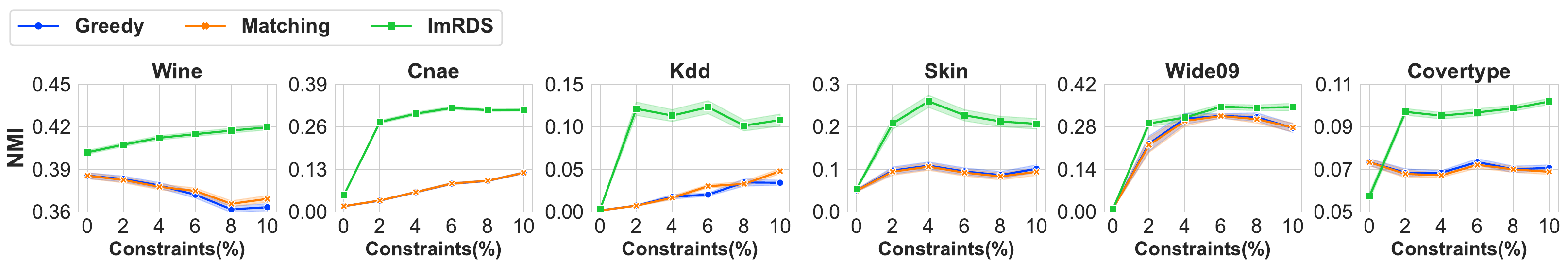}
        \label{subfig:Real-world-Data-NMI-disjoint}
    }
    \hfill
    \subfloat[No control on the size/ratio of intersected ML/CL.]{
        \includegraphics[width=\textwidth]{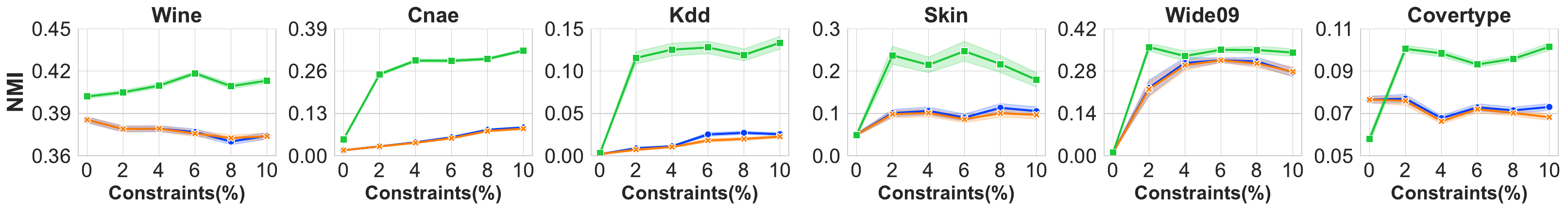}
        \label{subfig:Real-world-Data-NMI-intersected_0}
    }
    \hfill
    \subfloat[5\% of all data points are constrained ML/CL.]{
        \includegraphics[width=\textwidth]{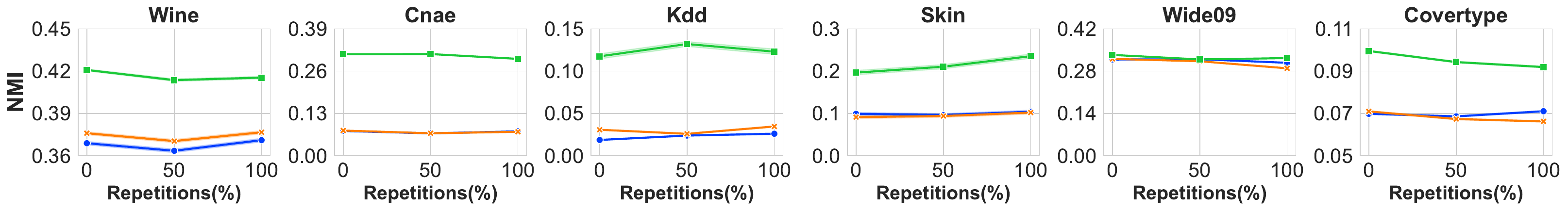}
        \label{subfig:Real-world-Data-NMI-intersected_5}
    }
    \hfill
    \subfloat[10\% of all data points are constrained ML/CL.]{
        \includegraphics[width=\textwidth]{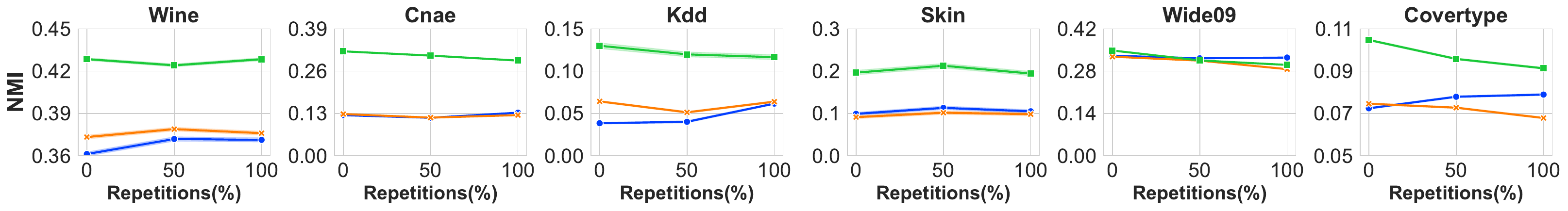}
        \label{subfig:Real-world-Data-NMI-intersected_10}
    }
    \caption{Normalized Mutual Information.}
    \label{fig:Real-world-Data-NMI-intersected}
\end{figure*}

\begin{figure*}[ht]
    \centering
    \captionsetup{justification=centering}
    \subfloat[Disjoint ML/CL.]{
        \includegraphics[width=\textwidth]{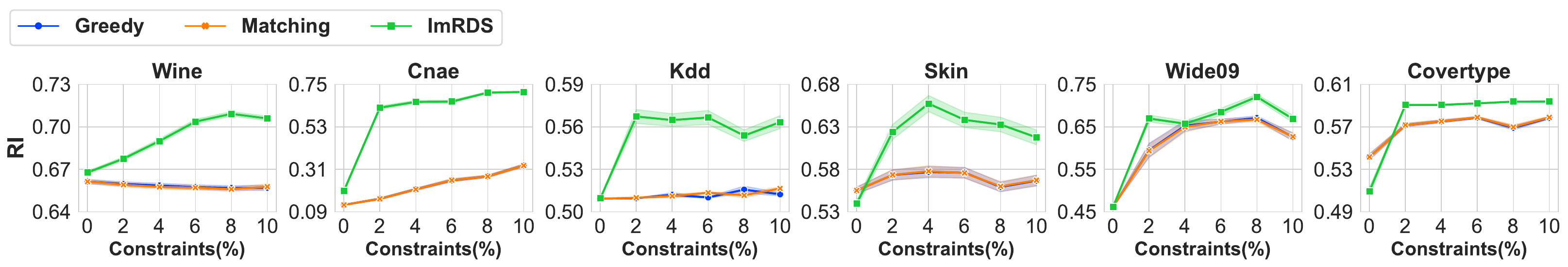}
        \label{subfig:Real-world-Data-RI-disjoint}
    }
    \hfill
    \subfloat[No control on the size/ratio of intersected ML/CL.]{
        \includegraphics[width=\textwidth]{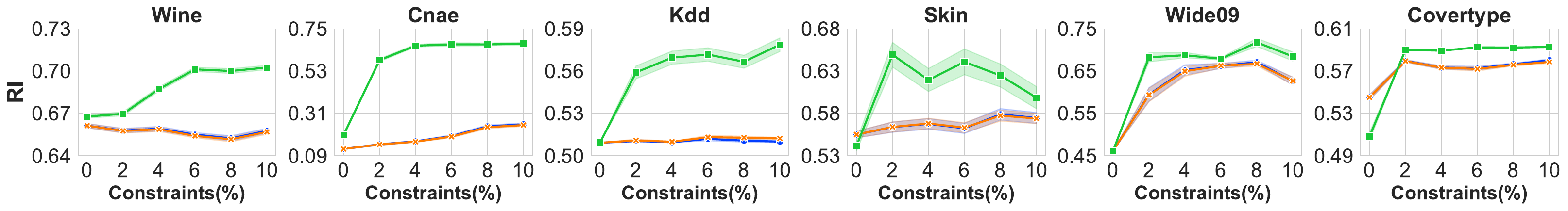}
        \label{subfig:Real-world-Data-RI-intersected_0}
    }
    \hfill
    \subfloat[5\% of all data points are constrained ML/CL.]{
        \includegraphics[width=\textwidth]{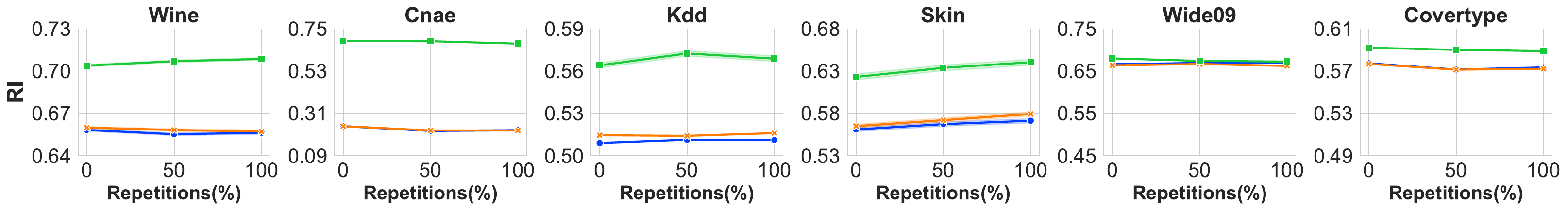}
        \label{subfig:Real-world-Data-RI-intersected_5}
    }
    \hfill
    \subfloat[10\% of all data points are constrained ML/CL.]{
        \includegraphics[width=\textwidth]{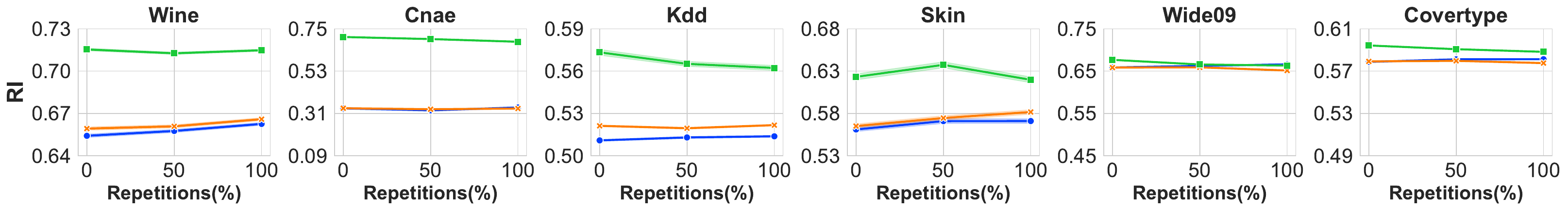}
        \label{subfig:Real-world-Data-RI-intersected_10}
    }
    \caption{Rand Index.}
    \label{fig:Real-world-Data-RI-intersected}
\end{figure*}

\begin{figure*}[ht]
    \centering
    \captionsetup{justification=centering}
    \subfloat[Disjoint ML/CL.]{
        \includegraphics[width=\textwidth]{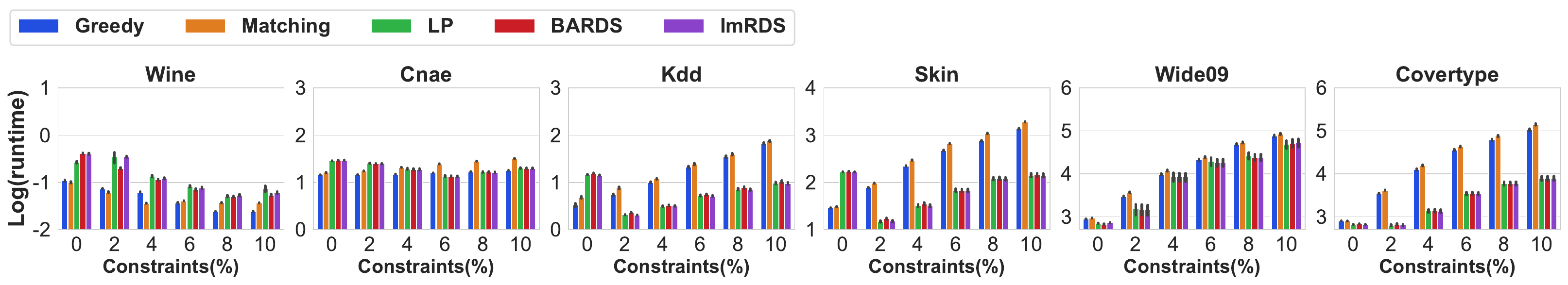}
        \label{subfig:Real-world-Data-runtime-disjoint}
    }
    \hfill
    \subfloat[No control on the size/ratio of intersected ML/CL.]{
        \includegraphics[width=\textwidth]{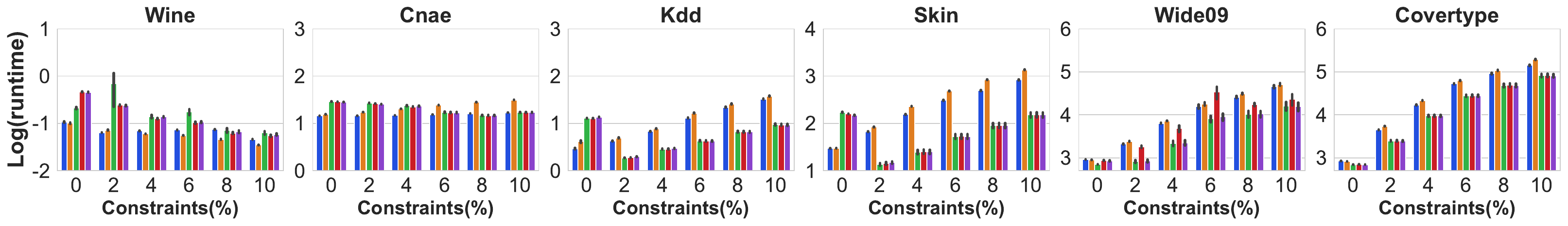}
        \label{subfig:Real-world-Data-runtime-intersected_0}
    }
    \hfill
    \subfloat[5\% of all data points are constrained ML/CL.]{
        \includegraphics[width=\textwidth]{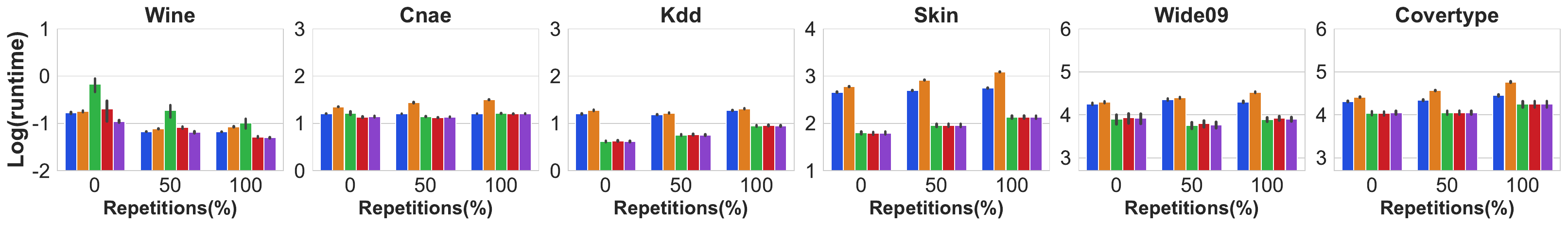}
        \label{subfig:Real-world-Data-runtime-intersected_5}
    }
    \hfill
    \subfloat[10\% of all data points are constrained ML/CL.]{
        \includegraphics[width=\textwidth]{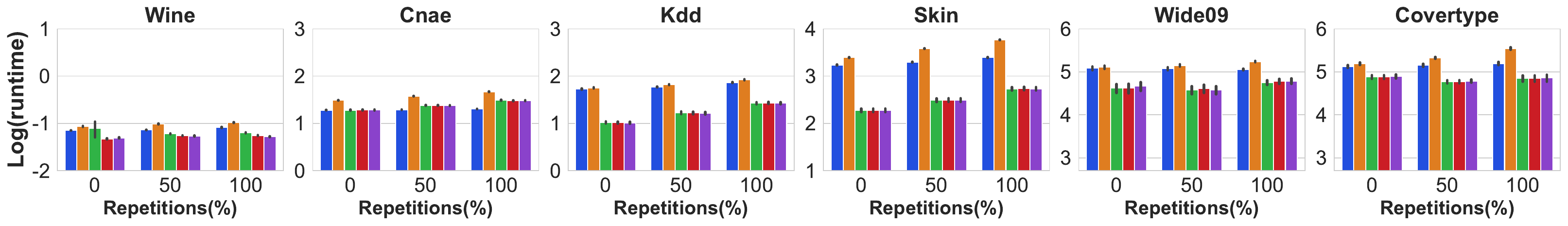}
        \label{subfig:Real-world-Data-runtime-intersected_10}
    }
     \caption{Runtime on the real-world dataset.}
    \label{fig:Real-world-Data-runtime}
\end{figure*}

\subsection{Clustering quality and efficiency with ML/CL}
\label{subsec:Clustering_quality}

In this subsection, we show and analyze the experimental clustering quality and efficiency (averaged over $40$ runs on each dataset) of Alg.~\ref{alg:whole} and the two baselines, when applied to disjoint and intersected ML/CL settings. First, we will detail the settings for each subfigure. Then, by comparing these subfigures, we will report the experimental results.

\descr{Experimental setting.} 
From Fig.~\ref{fig:Real-world-Data-Cost-intersected} to Fig.~\ref{fig:Real-world-Data-runtime}, we present the clustering quality results of  Alg.~\ref{alg:whole} for both disjoint and intersected ML/CL case. Specifically, we demonstrate Alg.~\ref{alg:whole}'s performance on the disjoint ML/CL case in subfigs.~(a), the arbitrary intersected ML/CL case in subfigs.~(b)), settings where 5\%  and 10\% of all data points are constrained under ML/CL constraints in subfigs.~(c) and (d), respectively.

In addition, when working on the intersected ML/CL constraints, it is worth to acknowledge the following issues:
\begin{itemize}[leftmargin=10pt]
    \item 
    First, when CL sets overlap, conflicts may arise among data points in the CL relationship, which are known to be computationally challenging to handle~\cite{19_davidson2007complexity}. In our experiments, we intentionally disregard these conflicts, which means that mis-clustered data points may occur as a result of the conflicts being ignored.

    \item
    Second, in contrast to disjoint constraints, the intersected constrained dataset permits the inclusion of the same points in different CL sets. When the intersected constrained set is provided as input, we treat points with ML constraints as identical (without altering the center-finding step in the algorithm). During the assignment step of the algorithm, we adopt a greedy approach by utilizing the updated results for assignment, regardless of whether a point has already been assigned to a center.

    \item 
    Third, in our experiment, our objective was to thoroughly investigate the impact of intersection on the performance of various clustering algorithms - Greedy, Matching and Alg.~\ref{alg:whole}. To achieve this, we vary the ratio of intersected ML/CL data points relative to a fixed ratio between the constrained data point size and the whole dataset size.
\end{itemize}

\descr{Experimental Analysis.} 
In this subsection, compared with the baselines and different constrained settings, we report the experimental
results and efficiency (averaged over $40$ runs on each dataset) for Alg.~\ref{alg:whole} on six real-world datasets. The summary of our observations is provided below.

\descr{Given disjointed and intersected ML/CL constraints, Alg.~\ref{alg:whole} outperforms the baseline algorithms as guaranteed.} During 0\% to 10\% of all data points as constrained points in Subfig.~\ref{subfig:Real-world-Data-Cost-disjoint}, the cost value of Alg.~\ref{alg:whole} always less than the baselines. From Subfig.~\ref{subfig:Real-world-Data-Purity-disjoint} to Fig.~\ref{subfig:Real-world-Data-RI-disjoint}, while varying the number of constrained data points from $0\%$ to $10\%$ of all data points, we observe that Alg.~\ref{alg:whole} demonstrates promising clustering accuracy across all performance metrics, varying $10\%$ to $300\%$ clustering quality improvement. Since the experimental clustering accuracy is highly related to the theoretical approximation ratio, the guaranteed $2$-opt for the constrained $k$-center problem with disjoint ML/CL constraints provides Alg.~\ref{alg:whole} with a significant advantage.

On intersected CL sets, despite the lack of theoretical guarantees, our algorithm (Alg.~\ref{alg:whole}) demonstrates remarkable clustering performance for the constrained $k$-center problem. As shown in Fig.~\ref{fig:Real-world-Data-Cost-intersected} to Fig.~\ref{fig:Real-world-Data-RI-intersected}, our algorithm (Alg.~\ref{alg:whole}) outperforms the two baselines in all settings - percentage of constrained ML/CL, metrics and datasets.

\begin{enumerate}[leftmargin=10pt]
    \item 
    When varying the percentage of constrained data points from $0\%$ to $10\%$ of the total data points without controlling the size/ratio of intersected ML/CL (results shown in Subfigs.~\ref{subfig:Real-world-Data-Cost-intersected_0}, \ref{subfig:Real-world-Data-Purity-intersected_0}, \ref{subfig:Real-world-Data-NMI-intersected_0} and \ref{subfig:Real-world-Data-RI-intersected_0}), that is through sampling with replacement during constraint construction, our algorithm consistently surpasses the performance of the two baseline algorithms across all four clustering quality metrics (note that constraints exceeding $10\%$ lead to excessive set conflicts). Our analysis suggests that the utilization of constraints in our algorithm enables continuous refinement of center selection and clustering, thereby enhancing its effectiveness even in natural scenarios involving intersected constraint sets.

    \item 
    When fixing the ratio of the constrained ML/CL to $5\%$ and $10\%$ while varying the ratio of intersected constraints from $0\%$ to $50\%$ and to $100\%$ ($5\%$ constraints in Subfigs.~\ref{subfig:Real-world-Data-Cost-intersected_5}, ~\ref{subfig:Real-world-Data-Purity-intersected_5}, \ref{subfig:Real-world-Data-NMI-intersected_5} and \ref{subfig:Real-world-Data-RI-intersected_5}; $10\%$ constraints in Subfigs.~\ref{subfig:Real-world-Data-Cost-intersected_10}, ~\ref{subfig:Real-world-Data-Purity-intersected_10}, \ref{subfig:Real-world-Data-NMI-intersected_10} and \ref{subfig:Real-world-Data-RI-intersected_10}), our algorithm outperforms the two baselines on four clustering quality measurements in almost all cases. However, we observed that the cases where it underperforms occur when 10\% of all data points have ML/CL constraints, with a 100\% ratio of intersected constraints in the Wide09 dataset. We suggest that in high-dimensional datasets, a significant number of intersected CL/ML constraints can invalidate more constraints and cannot improve the effectiveness of the chosen centers for our algorithm.
\end{enumerate}

\descr{Alg.~\ref{alg:whole} demonstrates significantly better performance on sparse datasets.} An example of such a dataset is Cnae-9, which is highly sparse with $99.22\%$  of its entries being zeros. Among all the datasets examined, we observed the largest discrepancy in terms of cost/accuracy between Alg.~\ref{alg:whole} and the two baseline algorithms when applied to the Cnae-9 dataset. For example, after considering the constraints, Alg.~\ref{alg:whole} increase about more than 60\% in Subfig.~\ref{subfig:Real-world-Data-RI-disjoint} and Subfig.~\ref{subfig:Real-world-Data-RI-intersected_0} for RI metric of Cnae-9. We attribute this disparity to the fact that the traditional $k$-center problem struggles with sparse high-dimensional datasets, as highlighted in the study~\cite{36_steinbach2004challenges}. However, the introduction of constraints proves beneficial in adjusting misclustering and center selection, offering improved performance in these cases.

\descr{As the number of constraints increases,  all three algorithms show a general upward trend in clustering accuracy. However, Alg.~\ref{alg:whole} stabilizes once the number of constraints reaches $4\%$.} 
We argue that the reason behind this behavior is that adding constraints effectively reduces the feasible solution space in optimization problems. Introducing more constraints is expected to improve the solution quality of an approximation algorithm. Consequently, we anticipate a higher clustering accuracy for a constrained $k$-center algorithm as the number of constraints increases.

The crucial factor in bounding the approximation ratio for the constrained $k$-center problem with disjoint ML/CL constraints is the selection of centers. Analyzing the results depicted in the figures, we observe that Greedy and Matching algorithms often yield similar clustering outcomes despite employing distinct strategies to handle the CL constraints. We contend that the cluster assignment methods have minimum influence on certain datasets, as the improvement in the experiment hinges on the correction of center selection facilitated by the approximation algorithm with constraints. This correction not only enhances the approximation ratio but also contributes to improved clustering accuracy. 

\begin{figure}[t]
    \centering
    \subfloat[Disjoint ML/CL.]{
        \includegraphics[width=\linewidth]{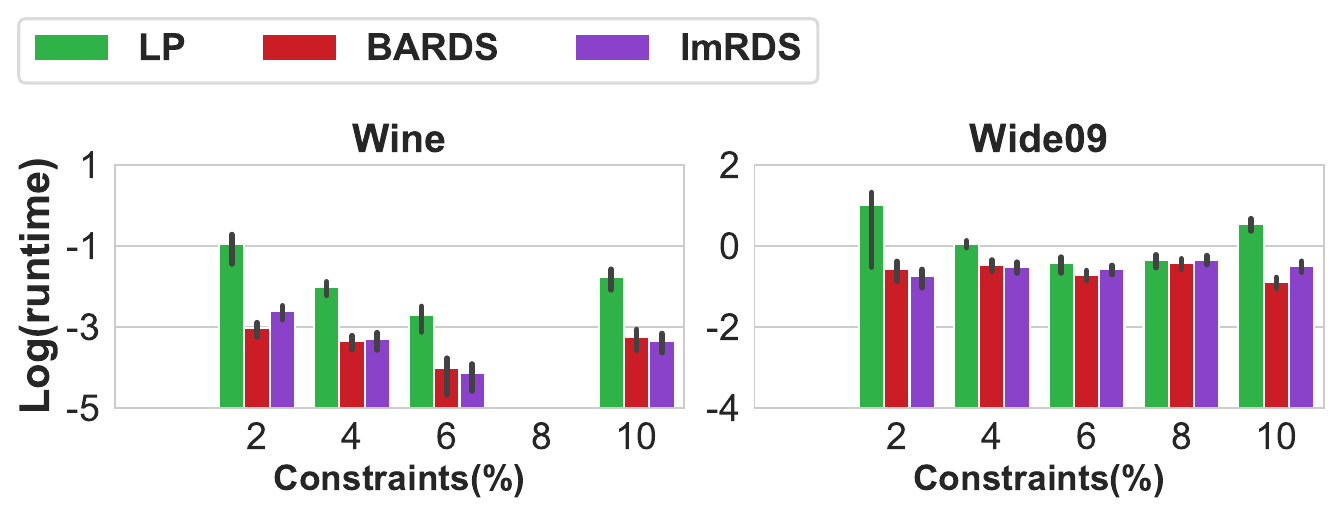}
        \label{subfig:RDS-runtime-disjoint}
    }
    
    \subfloat[No control on the size/ratio of intersected ML/CL.]{
        \includegraphics[width=\linewidth]{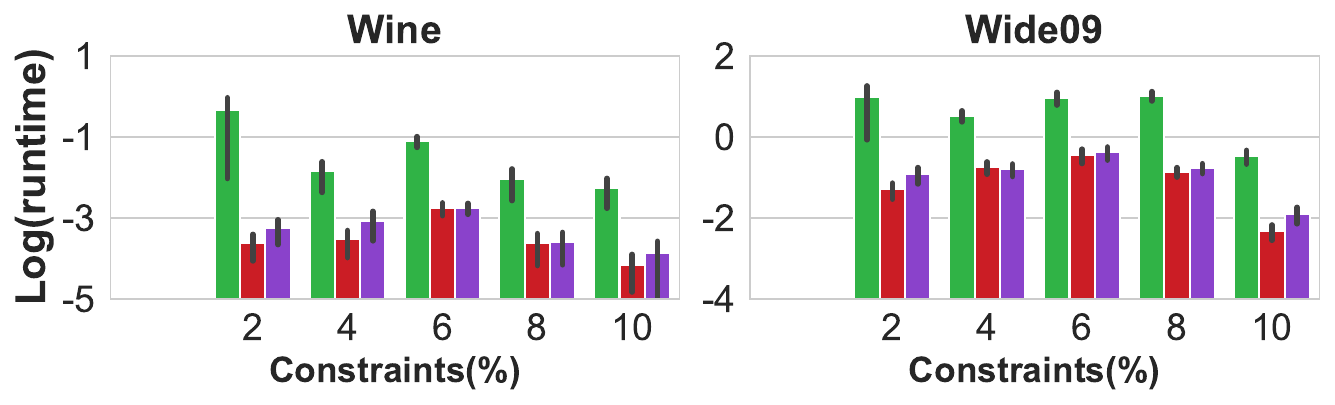}
        \label{subfig:RDS-runtime-intersected_0}
    }
     \caption{Runtime of RDS.}
    \label{fig:RDS-runtime}
\end{figure}

\subsection{Runtime Analysis}
In this subsection, we further analyze the runtime performance depicted in this work. 
First, we compare our algorithms (i.e., LP, BaRDS, ImRDS) with baseline methods under identical settings on real-world datasets, as shown in Fig.~\ref{fig:Real-world-Data-runtime}. Then, we highlight the runtime improvements achieved by ImRDS and BaRDS on both real-world and simulated datasets with larger values of $k$.

Alg.~\ref{alg:whole}  is efficient for solving the constrained $k$-center problem with \emph{disjoint and intersected} ML/CL constraints. As illustrated in Fig.~\ref{fig:Real-world-Data-runtime}, it demonstrates significantly better efficiency than the baseline methods (Matching and Greedy) when running on larger real-world datasets. As the number of constrained points increases, the cases where the maximum size of the CL set contains more points become more likely. Thus, Alg.~\ref{alg:whole} can quickly identify the set of centers, whereas the Greedy and Matching algorithms must rely on traditional methods to select the centers, resulting in slower performance.

In Subfig.~\ref{subfig:Real-world-Data-runtime-disjoint} and Subfig.~\ref{subfig:Real-world-Data-runtime-intersected_0}, Alg.~\ref{alg:whole} exhibits shorter runtimes at various constraint rates compared to the scenario with a 0\% constraint rate for the Wine, Cnae, Kdd, and Skin datasets. We give a similar reason as above. This phenomenon is primarily attributable to the relationship between the value of $k$ and the size of the largest CL set. For datasets with small values of $k$, the size of the largest CL set is often equal to $k$. Consequently, our algorithm initially selects one of the largest CL sets and augments it to form the desired center set directly. In these cases, the first selected CL set immediately forms the center set, removing the necessity for further center selection. Conversely, without constraints (i.e., at 0\% constraint rates), the step of center selection is required, leading to longer runtimes.

As the number of constraints increases, the runtime generally increases for larger datasets in Fig.~\ref{fig:Real-world-Data-runtime}, since all algorithms require more processing time to handle points under CL/ML constraints. However, we observe that the smaller Wine dataset spends more runtime on center selection rather than assignment steps, resulting in significantly longer runtimes for the LP algorithm compared to other methods. Additionally, BaRDS and ImRDS demonstrate runtime improvements over LP, consistent with ImRDS design predictions.

\begin{figure}[t]
    \centering
    \includegraphics[width=\linewidth]{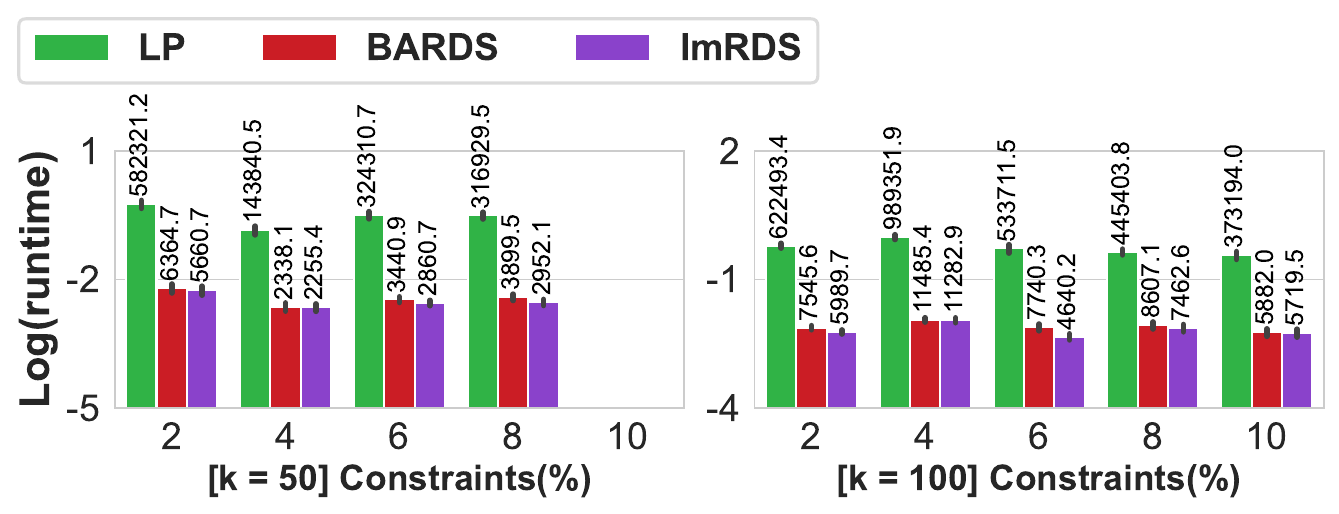}
    \caption{Runtime on the simulated dataset with large $k$ ($k$ = 50 or 100). }
    \label{fig:simu_runtime}
\end{figure}

\descr{Analysis of RDS Runtimes}

According to the above results, we find the algorithms (i.e., LP, BaRDS, ImRDS) exhibit similar running times for most large datasets. 
Therefore, we focus on analyzing the runtime of RDS specifically, by avoiding the influence of the assignment steps. Fig.~\ref{fig:RDS-runtime} presents the experimental results for RDS runtime, comparing the performance of the three algorithms across two real-world datasets. We select these particular datasets because KDD and Skin involve binary classification tasks, while Cnae and Covertype facilitate adjusting the maximum CL set size to equal the parameter $k$. Additionally, the presence of empty entries in this figure indicates that the corresponding algorithms did not utilize RDS for those specific constraint ratios in both Figs.~\ref{fig:RDS-runtime} and~\ref{fig:simu_runtime}.

\textit{In Fig.~\ref{fig:RDS-runtime}, the RDS algorithms utilizing matching (i.e., BaRDS and ImRDS) consistently run faster than the LP method.} This is because the LP method requires an optimizer, resulting in a longer runtime. Furthermore, the two matching-based RDS algorithms (BaRDS and ImRDS) demonstrate similar runtimes, primarily because their performance heavily depends on the parameter $k$, which tends to be similar in real-world datasets. Consequently, distinguishing significant runtime differences between these methods becomes challenging when $k$ is not sufficiently large.

To further evaluate runtime performance, we conducted additional experiments on two simulated datasets with larger values of $k$ (specifically, $k = 50$ and $k = 100$). By varying constraint ratios from 2\% to 10\%, we compared the runtimes of ImRDS, BaRDS, and the LP algorithm, as illustrated in Fig.~\ref{fig:simu_runtime}. Note that the values in the figure represent average runtimes measured in microseconds ($\mu s$).

The goal was to evaluate the RDS runtime of these algorithms under identical settings. The results clearly show that ImRDS consistently outperforms both the LP algorithm and BaRDS in terms of efficiency when clustering datasets with larger values of $k$ across different numbers of disjoint constraints. 

\subsection{Empirical Approximation Ratio}
\label{subsec:Empirical}

\begin{table}[t]
\centering
\footnotesize

\caption{Empirical Approximation Ratio.}
\label{tab:Approximation-Ratio-implemented-on-Synthetic-Datasets}
\scalebox{0.95}{\footnotesize
\begin{tabular}{ c| c|c c c c}
 \toprule[1pt]
 \#ML/CL points & Algorithm & $k = 5$ & $k = 10$ & $k = 50$ &$k = 100$ \\
\midrule[1pt]
& Alg.~\ref{alg:whole} & \textbf{1.9901} & \textbf{1.9971} & \textbf{1.9964} & \textbf{1.9997} \\
 $1,000$ & Matching & 2.8678 & 2.8295 & 2.9281 & 2.7805 \\ 
 & Greedy & 2.9542 & 2.9793 & 2.9308 & 2.9286 \\ 
\midrule
& Alg.~\ref{alg:whole} & \textbf{1.9892} & \textbf{1.9931} & \textbf{1.9942} & \textbf{1.9986 }\\ 
 $2,000$ & Matching & 2.7537 & 2.8047 & 2.9246 & 2.6884 \\ 
 & Greedy & 2.9992 & 2.9648 & 3.0400 & 2.9073 \\ 
\midrule
& Alg.~\ref{alg:whole} & \textbf{1.9941} & \textbf{1.9908 }&\textbf{ 1.9958} & \textbf{1.9952 }\\ 
 $5,000$ & Matching & 2.5500 & 2.7392 & 3.0239 & 2.8794 \\ 
 & Greedy & 3.0555 & 3.0647 & 3.2250 & 3.1741 \\ 
\midrule
& Alg.~\ref{alg:whole} & \textbf{1.9965} & \textbf{1.9938} & \textbf{1.9979} & \textbf{1.9983} \\ 
 $10,000$ & Matching & 2.5140 & 2.7952 & 3.0236 & 3.1212 \\ 
 & Greedy & 3.3707 & 3.5421 & 3.4695 & 3.4143 \\ 
\bottomrule[1pt]
\end{tabular}
}

\end{table}

In the evaluation, \textbf{approximation ratio} is measured based on the clustering radius ratio obtained on the simulated dataset, i.e., $\text{Approx Ratio} = r_{\text{max}}/r^{*}$, where $r_{\text{max}}$ represents the maximum radius (worst solution cost) obtained from $1,000$ runs of the algorithm ($10$ simulated datasets $\times$ $10$ distinct constrained cases $\times$ $10$ repeated runs) and $r^{*}$ denotes the optimal cost by construction in the simulated dataset. 
Tab.~\ref{tab:Approximation-Ratio-implemented-on-Synthetic-Datasets} presents the empirical ratios, demonstrating that Alg.~\ref{alg:whole} consistently produces better approximation ratios below two, which aligns with the theoretical results presented in Section~\ref{sec:Alg}.

\section{Conclusion}
\label{sec:Conclusion}
In this paper, we confirmed the existence of an efficient approximation algorithm for the constrained $k$-center problem with instance-level ML/CL constraints. Despite the known inapproximability barrier due to the arbitrary CL constraints, we achieved a significant breakthrough by uncovering that the reducible disjoint set structure of CL constraints on $k$-center can lead to constant factor approximation in our analysis. To obtain the best possible 2-approximation, we introduced a structure called \textit{reverse dominating set} (RDS) for identifying the desired set of cluster centers. For efficient RDS computation, we designed an auxiliary graph and proposed two computational methods: the first utilizes a suite of linear programming-based techniques,  grounded in the integral polyhedron of the designated LP; the second leverages a greedy algorithm, predicated on maximum matching within the auxiliary graph. Our work sheds light on devising efficient approximation algorithms to tackle more intricate clustering challenges that incorporate constraints as background knowledge. 
\section*{Limitation and Future Work}
\label{sec:Discussion}
\textbf{Extending to Other Clustering Problems.} Our work introduces a 2-approximation algorithm specifically tailored for constrained $k$-center clustering. However, the current methodology does not directly extend to other fundamental clustering problems, such as $k$-median and $k$-means. We hope to further extend our technique to these clustering problems, potentially improving their accuracy by effectively incorporating relevant background knowledge.

\textbf{Adaptation to Modern Computational Models.} Clustering algorithms today frequently operate within contemporary computational frameworks, including the streaming computational model and Massively Parallel Computation (MPC) models. Extending our approach to these models would substantially enhance clustering accuracy and applicability, particularly in the context of big data applications. Future work will aim to adapt our technique for better performance and scalability within these computational paradigms.

\textbf{Integrating Large Language Models (LLMs) into Constraint Generation.}  Although our algorithms accommodate both disjoint and intersected constraints (CL/ML), the current approach relies on costly procedures, such as Deep Package Inspection (DPI) for traffic classification, to generate these constraints. Future research will explore leveraging LLMs to automatically derive constraints from available background knowledge, significantly reducing the manual effort and associated costs of constraint generation.

\section*{Acknowledgement}

This work is supported by the Taishan Scholars Young Expert Project of Shandong Province (No. tsqn202211215) and the National Natural Science Foundation of China (Nos. 12271098 and 12271259). A preliminary version of this paper has appeared in: 
Efficient Constrained K-center Clustering with Background Knowledge. In Proceedings of AAAI 2024, pp. 20709-20717. 
Part of this work was done when the second author was pursuing her master's degree at Qilu University of Technology. 
We gratefully acknowledge the anonymous referees for their insightful comments and constructive suggestions.

\bibliographystyle{IEEEtran}
\bibliography{tnnls25}

\clearpage
\appendices
\section{Supplementary Material}
\subsection{Comparison on Computational  Complexity } 

For runtime evaluation, we compare both of our algorithms against the traditional $k$-center algorithm, $k$-center algorithm enhanced with the greedy approach~\cite{18_wagstaff2001constrained}, and $k$-center algorithm enhanced with further optimized matching~\cite{21_jia2023efficient}.
The runtimes of the five algorithms are listed in Tab.~\ref{tab:compar_complexity_thm}. Note that while our algorithm has a slightly higher theoretical runtime, the actual runtime gap becomes even smaller when $k$ is relatively small, as is often the case in real-world applications.
\vspace{-6pt}
\begin{table}[th]
\centering
\small
\caption{Comparison of computational complexity.}
\label{tab:compar_complexity_thm}
\scalebox{0.87}{
\begin{tabular}{l|c }
\toprule[1pt]
\textbf{Algorithms} & \textbf{Computational Complexity}\\
\midrule[0.5pt]
 LP-based Alg.~ 3& Polytime \\ 
 BaRDS  Alg.~3 & $O( nk^3\cdot\log n)$ \\ 
 \midrule[0.5pt]
 Matching \cite{21_jia2023efficient} & $O(nk)$ \\ 
 Greedy \cite{18_wagstaff2001constrained} & $O(nk)$ \\ 
 Traditional $k$-center \cite{2_gonzalez1985clustering_minmax}& $O(nk)$ \\ 
\midrule[1pt]
\end{tabular}
}
\end{table}

\begin{table*}[t]

\centering
\caption{Biased sampling constraints}
\scalebox{0.87}{
\footnotesize
\centering
\begin{tabular}{c|c|c c c c|c c c c|c c c c}
\toprule[1pt]
  \multicolumn{2}{c}{Constraints}& \multicolumn{4}{c}{1\%}& \multicolumn{4}{c}{2\%}& \multicolumn{4}{c}{3\%}\\
  \cmidrule(lr){1-2} \cmidrule(lr){3-6} \cmidrule(lr){7-10}\cmidrule(lr){11-14}
  
 \textbf{Datasets} & \textbf{Algorithm} &\textbf{Cost} & \textbf{Purity} & \textbf{NMI} & \textbf{RI} & \textbf{Cost} & \textbf{Purity} & \textbf{NMI} & \textbf{RI} & \textbf{Cost} & \textbf{Purity} & \textbf{NMI} & \textbf{RI}\\
    \midrule[1pt]
    Wine & Greedy & 877.82 & 0.65 & 0.37 & 0.66 &  432.15 & 0.65 & 0.38 & 0.66 & 647.06 & 0.65 & 0.38 & 0.65 \\ 
        & Matching & 651.66 & 0.65 & 0.38 & 0.66 & 410.31 & 0.65 & 0.38 & 0.66 & 587.40 & 0.65 & 0.38 & 0.65 \\
        & \textbf{Alg.3 }& \textbf{642.76} & \textbf{0.69 }& \textbf{0.41} & \textbf{0.71} & \textbf{386.77} & \textbf{0.67} & \textbf{0.41} &	\textbf{0.68} &  \textbf{467.97} & \textbf{0.68} &  \textbf{0.42} & \textbf{0.69} \\
        \midrule
    Cnae-9 & Greedy & 7.27 & 0.13 & 0.03 & 0.15 & 6.45 & 0.12 & 0.02 & 0.14 & 7.49 & 0.14 & 0.06 & 0.19 \\
        & Matching & 6.90 & 0.13 & 0.03 & 0.15  &  6.38 & 0.12 & 0.02 & 0.14 & 7.10 & 0.14 & 0.06 &  0.19  \\ 
        & \textbf{Alg.3} & \textbf{6.66 }& \textbf{0.31} & \textbf{0.25} & \textbf{0.56} & \textbf{6.14 } & \textbf{0.20} & \textbf{0.13} & \textbf{0.36} & \textbf{6.75} & \textbf{0.31} & \textbf{0.24} & \textbf{0.59}\\ 
        \midrule
    Kdd	& Greedy & 3.01 &0.57 & 0.01 & 0.51 & 2.93 & 0.57 & 0.01 & 0.51 & 3.10 & 0.57 & 0.02 & 0.51  \\
        & Matching & 2.89 & 0.57 & 0.01 & 0.51 & 2.86 & 0.57 & 0.01 & 0.51 & 2.98 & 0.58 & 0.02 & 0.51 \\
        & \textbf{Alg.3}  & \textbf{2.55} & \textbf{0.66} & \textbf{0.13} & \textbf{0.57} &  \textbf{2.56} & \textbf{0.64} & \textbf{0.10} & \textbf{0.56} & \textbf{2.54} & \textbf{0.67} & \textbf{0.14} & \textbf{0.58}\\
        \midrule
    Skin & Greedy & 345.17 &  0.80 & 0.08 & 0.56 & 350.77 & 0.80 & 0.08 & 0.56 & 331.17 & 0.80 & 0.09 & 0.56 \\
        & Matching & 329.62 & 0.80 & 0.08 & 0.56  & 334.74 & 0.80 & 0.08 & 0.56  & 324.35 & 0.80 & 0.09 & 0.56 \\
        & \textbf{Alg.3} & \textbf{310.47} & \textbf{0.83} & \textbf{0.25} & \textbf{0.65} & \textbf{322.73} & \textbf{0.83} & \textbf{0.20} & \textbf{0.63} & \textbf{317.41} & \textbf{0.82} & \textbf{0.20} & \textbf{0.61} \\
        \midrule
    Covertype & Greedy & 7217.70 & 0.500 & 0.07& 0.57 & 6995.81 & 0.498 & 0.07 & 0.57 & 7390.57 & 0.498 & 0.07 & 0.57 \\
        & Matching & 7036.07 & 0.499 & 0.07 & 0.57  & \textbf{6949.73} & 0.498 & 0.07 & 0.57 & 7204.55&  0.500 & 0.07 & 0.57 \\
        & \textbf{Alg.3} & \textbf{6747.70} & \textbf{0.504} & \textbf{0.09} & \textbf{0.59}  & 7117.17 & \textbf{0.500} & \textbf{0.09} & \textbf{0.58} & \textbf{6943.59} & \textbf{0.503} & \textbf{0.09 }& \textbf{0.59} \\
\midrule[1pt]
\end{tabular}
\label{tab:biased}
}

\end{table*}

 \begin{table*}[t]
\centering
\caption{The cluster distribution of all the real-word datasets.\\ (We use the \textbf{bold} and \textbf{\textit{italic}} to mark the maximum and minimum ratio of the class respectively.)}
\label{tab:datasets_class}

\scalebox{0.9}{\footnotesize
\begin{tabular}{l|c|c|c|c|c|c|c|c|c|c|c|c|c}
\toprule[1pt]
\textbf{Datasets} & \textbf{Clu.1} & \textbf{Clu.2} & \textbf{Clu.3} & \textbf{Clu.4} & \textbf{Clu.5} & \textbf{Clu.6} & \textbf{Clu.7} & \textbf{Clu.8} & \textbf{Clu.9} & \textbf{Clu.10} & \textbf{Clu.11} & \textbf{Clu.12} & \textbf{Clu.13} \\
\midrule[1pt]
{Wine}  & 33.15\% & \textbf{39.89\%} & \textbf{\textit{26.97\%}} & - & - & - & - & -& - & - & - & - & -\\ 
{Cnae-9 } & 11.11\% & 11.11\% & 11.11\% & 11.11\% & 11.11\% & 11.11\% & 11.11\% & 11.11\% & 11.11\% & - & - & - & -\\ 
{NLS-KDD} \ & \textbf{56.92\%} & \textbf{\textit{43.08\%}} & - & - & - & - & -& - & - & - & - & - & -\\ 
{Skin}  & \textbf{79.25\%} & \textbf{\textit{20.75\%}} & - & - & - & - & -& - & - & - & - & - & -\\ 
{Wide09 } & \textbf{59.70\%} & 27.87\% & 12.30\% & 0.19\% & 0.02\% & 0.0009\% & 0.0007\% & 0.0005\% & 0.0002\% & 0.0002\% & 0.0002\% & 0.0002\% & \textbf{\textit{0.0002\%}}\\
{Covertype} & 36.46\% & \textbf{48.76\% }& 6.15\% & 3.53\% &\textbf{\textit{2.99\%}} & - & - & - & - & -& - & - & -\\
\midrule[1pt]
\end{tabular}
}
\end{table*}

\subsection{Additional Analysis on Experimental Result}
In this subsection, we will provide more analysis of the experimental results for Figs.~\ref{fig:Real-world-Data-Purity-intersected} and ~\ref{fig:Real-world-Data-NMI-intersected}.

 In Fig.~\ref{fig:Real-world-Data-Purity-intersected}, our algorithm clearly outperforms all other methods in subfigures (a) and (b). The results show that our algorithm consistently achieves the best performance on the sparse dataset (Cnae) and the binary classification datasets (KDD and Skin). 

 In subfigures (c) and (d), it also achieves the best performance across all data sets, except for Wide09 and Covertype. These subfigures evaluate clustering quality by varying the repetition rate 
 of constrained points from 0\% to 100\%, using 5\% and 10\% of the data as constrained points. For the two datasets (Wide09 and Covertype), our method is not always the best but remains competitive in clustering quality:  it performs best when the repetition rate of constrained points is low, while the performance degrades when the repetition rate is extremely high (i.e. 100\%, where every sampled point appears in at least two CL sets).
 In general, the performance degradation is primarily due to two reasons: 
 \begin{enumerate}
 
 \item {\bf{Repeated points in CL sets affect the center selection process in our algorithm.}}  Our algorithm is designed for disjoint CL sets (i.e., sets without repeated points) as stated in the algorithmic section. Therefore, it is expected that an extremely high repetition rate of data points in the CL sets negatively impacts the performance. This degradation is observed across nearly all datasets.

\item {\bf{Imbalanced cluster sizes in the ground truth exacerbate the performance drop.}} Both Wide09 and Covertype have imbalanced cluster sizes (i.e. a higher ratio of the standard derivation to mean, as shown in Tab.~\ref{tab:datasets}).  When repeated data points come from smaller clusters, it further weakens the effectiveness of leveraging background knowledge during center selection. This explains why both the matching algorithm and our approach show a similar declining trend in performance in Subfigs.~\ref{subfig:Real-world-Data-Purity-intersected_5} and~\ref{subfig:Real-world-Data-Purity-intersected_10}. In addition, the performance drop of our algorithm is relatively more significant as it relies more heavily on the center selection based on back ground knowledge.
\end{enumerate}

 In Fig.~\ref{fig:Real-world-Data-NMI-intersected}, we evaluate clustering performance using Normalized Mutual Information (NMI). Note that NMI is partially influenced by the distribution of data points across optimal clusters (i.e., the ground truth), that is, whether the number of data points in each cluster is balanced. In particular, for dataset Cnae-9, which has a balanced distribution of data points among clusters, the NMI  score is comparable to other evaluation metrics. However, in cases of imbalanced distributions, there are gaps between NMI and other metrics. Specifically, NMI measures the amount of information shared between the predicted and true distributions, taking into account the overall entropy and how well the distributions match. Notably, if the predicted results split a true category into multiple clusters (even if each resulting cluster is highly pure) the overall shared information may be limited, leading to a lower NMI score. More details on the reasons for the low scores of these indicators in Fig.~\ref{fig:Real-world-Data-NMI-intersected}:
\begin{itemize}
    \item The NMI score is sensitive to imbalanced datasets. 
     We have already discussed this characteristic in Sec.~\ref{sec:Exp} regarding the imbalance in the original datasets. For clarity and more detailed analysis, we present the distribution of each dataset based on the ground truth in Tab.~\ref{tab:datasets_class} below. As shown in the table, most datasets exhibit class imbalance, with the exception of Cnae-9, which has a relatively uniform distribution. Consequently, for Cnae-9, the NMI score is similar to other metrics, particularly the Purity metric. However, all metrics yield low scores on Cnae-9, primarily because the $k$-center algorithm inherently performs poorly when clustering this dataset.
    \item The other metrics demonstrate more consistent performance across various types of datasets, including imbalanced and sparse datasets. Purity and Rand Index (RI) evaluate clustering performance from different perspectives: Purity measures intra-cluster similarity by assessing how homogeneous each cluster is with respect to the ground-truth labels, where higher purity indicates that most points within a cluster belong to the same class. In contrast, Rand Index captures inter-cluster similarity by considering all pairs of samples and measuring the proportion of pairs that are either assigned to the same cluster and share the same label, or assigned to different clusters and have different labels.
\end{itemize}

\begin{table}[ht]
\centering
\footnotesize
\caption{Ablation study regarding the constraints}
\scalebox{1}{
\begin{tabular}{c|c|c c c c}
\toprule[1pt]
Dataset & Constraints & Cost & Purity & NMI & RI\\
 \midrule
Wine & CL & 388.71 & 0.672 & 0.410 & 0.680\\
&ML & \textbf{368.06} & 0.664 & 0.403 & 0.667\\
&ML/CL	& 466.28 & \textbf{0.674} &\textbf{0.411}	&\textbf{0.683} \\
 \midrule
Cnae-9	&CL	&6.88 & 0.33	&0.26	&0.63\\
    & ML & \textbf{6.36 }& 0.12 &0.018	&0.13   \\
    &ML/CL	&6.97 & \textbf{0.35}	&\textbf{0.28}	&\textbf{0.64}\\
 \midrule
Kdd&	CL	&2.572 & 0.66 &0.132	&0.57\\
&ML	&\textbf{2.442} & 0.57 &0.004	&0.51 \\
&ML/CL	&2.570	& \textbf{0.67} &\textbf{0.136} &\textbf{0.58}\\
 \midrule
Skin	&CL	&\textbf{262.17 }& \textbf{0.831} & \textbf{0.24}	&\textbf{0.65}\\
&ML	&297.12	& 0.797 &0.08	&0.56\\
&ML/CL	&306.49 & 0.830 &0.23	&0.64 \\
 \midrule
Wide09	&CL	&1.70	& 0.67 &0.197	&0.58 \\
&ML	&\textbf{1.66}	& 0.673 &0.199	&0.58 \\
&ML/CL	&2.06	& \textbf{0.76} &\textbf{0.308}	&\textbf{0.67} \\
 \midrule
Covertype & CL	& \textbf{5098.16} &0.503& 0.0955 & \textbf{0.590}\\
& ML & 6516.18 &0.497& 0.0801 & 0.523 \\
&ML/CL	&6712.76 & \textbf{0.504} &\textbf{0.0957 }&	0.589\\
\midrule[1pt]

\end{tabular}
}
\label{tab:ablationTable}
\end{table}

\subsection{Ablation Studies}
\label{sec:ablation}
The ablation study of our algorithm in Tab.~\ref{tab:ablationTable} is under different constraints of CL only, ML only, and ML and CL combined using 2\% of the sampled data points, all in non-intersected settings. The constraints are added according to the respective ground truth of each dataset. Overall, the ablation results demonstrate that: i) as the number of constraints increases (e.g., ML \& CL), in general, the cost of solving a constrained $k$-center instance also goes up. We reason that this phenomenon is because incrementally adding constraints is equivalent to reducing the feasible solution space. Therefore, less solution costs are generated by adding only ML or CL; ii) On the other hand, in terms of clustering effectiveness (i.e., NMI, Purity and RI in the table), it is expected that ML/CL mostly outperforms the others as they embed more background knowledge to the clustering.

\subsection{Biased Sampling for ML/CL Sets}
\label{sec:biased_sampling}
In this subsection, we implemented density-biased sampling when generating constraints from our datasets, specifically by sampling an equal number of constraints for each class. The reported results in Tab.~\ref{tab:biased} were attained with the constraint count set at \{1\%, 2\%, 3\%\} of the dataset size, oversampling (and running) 40 times on each dataset. Across all of these datasets, Alg.~\ref{alg:whole} consistently demonstrates the advantages of clustering quality. Note that it is not feasible to include the Wide09 dataset in the biased sampling experiments since one class of the Wide09 dataset contains only one data point. 

Additionally, due to the skewed distribution of each class in the Covertype dataset (Class1: 36.46\%, Class2: 48.76\%, Class3: 6.15\%, Class4: 3.53\%, Class5: 2.99\%, Class6: 1.63\%, Class7: 0.47\%), sampling the same number of constraints from each class creates a substantial disparity between the constraint distribution and the distribution of the raw dataset. Since our work proposes a novel approach to initializing the center set, such a gap between these two distributions exacerbates the quality of the initial center set for our work (relatively more constraints from those classes having fewer data). This explains why our work did not achieve the best performance in terms of cost on the Covertype dataset.

\end{document}